\newenvironment{talign*}
{\csname align*\endcsname}
{\endalign}
\definecolor{coral}{RGB}{255,127,80}
\definecolor{darkgreen}{RGB}{0,100,0}
\definecolor{darkyellow}{RGB}{204,153,0}
\definecolor{salmon}{RGB}{250,128,114}
\definecolor{champagne}{RGB}{247,231,206}
\definecolor{kleinblue}{RGB}{0,47,167}
\definecolor{bauhiniapurple}{RGB}{193,84,193}
\newcommand{\thmref}[1]{\hyperref[#1]{\transparentred{Theorem~\ref*{#1}}}}
\newcommand{\defref}[1]{\hyperref[#1]{\transparentgray{Definition~\ref*{#1}}}}
\newcommand{\propref}[1]{\hyperref[#1]{\transparentblue{Proposition~\ref*{#1}}}}
\newcommand{\assumpref}[1]{\hyperref[#1]{\transparentgreen{Assumption~\ref*{#1}}}}
\newcommand{\remarkref}[1]{\hyperref[#1]{\transparentyellow{Remark~\ref*{#1}}}}
\newcommand{\hypref}[1]{\hyperref[#1]{\transparentpurple{Hypothesis~\ref*{#1}}}}
\newcommand{\conjref}[1]{\hyperref[#1]{\transparentorange{Conjecture~\ref*{#1}}}}
\newcommand{\lemref}[1]{\hyperref[#1]{\transparentcyan{Lemma~\ref*{#1}}}}
\newcommand{\corref}[1]{\hyperref[#1]{\transparentmagenta{Corollary~\ref*{#1}}}}
\newcommand{\exampleref}[1]{\hyperref[#1]{\transparentlime{Example~\ref*{#1}}}}
\newcommand{\noteref}[1]{\hyperref[#1]{\transparentpink{Notation~\ref*{#1}}}}
\newcommand{\claimref}[1]{\hyperref[#1]{\transparentviolet{Claim~\ref*{#1}}}}
\newcommand{\probref}[1]{\hyperref[#1]{\transparentsalmon{Problem~\ref*{#1}}}}
\newcommand{\obsref}[1]{\hyperref[#1]{\transparentlavender{Observation~\ref*{#1}}}}
\newcommand{\figref}[1]{\hyperref[#1]{\transparentteal{Figure~\ref*{#1}}}}
\newcommand{\tabref}[1]{\hyperref[#1]{\transparentdarkgreen{Table~\ref*{#1}}}}
\newcommand{\secref}[1]{\hyperref[#1]{\transparentdarkyellow{Section~\ref*{#1}}}}
\newcommand{\appref}[1]{\hyperref[#1]{\transparentcoral{Appendix~\ref*{#1}}}}
\newcommand{\algref}[1]{\hyperref[#1]{\transparentkleinblue{Algorithm~\ref*{#1}}}}
\newcommand{\equref}[1]{\hyperref[#1]{\transparentbauhiniapurple{Equation~\ref*{#1}}}}
\newtheoremstyle{custom}
{1pt} 
{1pt} 
{\itshape} 
{} 
{\bfseries} 
{} 
{ } 
{\thmname{#1} \thmnumber{#2} \thmnote{(#3)} . } 
\theoremstyle{custom}
\newtheorem{innerdefinition}{Definition}
\newtheorem{innerproposition}{Proposition}
\newtheorem{innerassumption}{Assumption}
\newtheorem{innerremark}{Remark}
\newtheorem{innertheorem}{Theorem}
\newtheorem{innerhypothesis}{Hypothesis}
\newtheorem{innerconjecture}{Conjecture}
\newtheorem{innerlemma}{Lemma}
\newtheorem{innercorollary}{Corollary}
\newtheorem{innerexample}{Example}
\newtheorem{innernotation}{Notation}
\newtheorem{innerclaim}{Claim}
\newtheorem{innerproblem}{Problem}
\newtheorem{innerobservation}{Observation}
\newmdenv[
  backgroundcolor=gray!10,
  linecolor=gray!100,
  linewidth=0.8pt,
  skipabove=2pt,
  skipbelow=2pt,
  innertopmargin=10pt,
  innerbottommargin=5pt,
  innerleftmargin=5pt,
  innerrightmargin=5pt,
]{definitionframe}
\newmdenv[
  backgroundcolor=blue!10,
  linecolor=blue!100,
  linewidth=0.8pt,
  skipabove=2pt,
  skipbelow=2pt,
  innertopmargin=10pt,
  innerbottommargin=5pt,
  innerleftmargin=5pt,
  innerrightmargin=5pt,
]{propositionframe}
\newmdenv[
  backgroundcolor=green!10,
  linecolor=green!100,
  linewidth=0.8pt,
  skipabove=2pt,
  skipbelow=2pt,
  innertopmargin=10pt,
  innerbottommargin=5pt,
  innerleftmargin=5pt,
  innerrightmargin=5pt,
]{assumptionframe}
\newmdenv[
  backgroundcolor=yellow!10,
  linecolor=yellow!100,
  linewidth=0.8pt,
  skipabove=2pt,
  skipbelow=2pt,
  innertopmargin=10pt,
  innerbottommargin=5pt,
  innerleftmargin=5pt,
  innerrightmargin=5pt,
]{remarkframe}
\newmdenv[
  backgroundcolor=red!10,
  linecolor=red!100,
  linewidth=0.8pt,
  skipabove=2pt,
  skipbelow=2pt,
  innertopmargin=10pt,
  innerbottommargin=5pt,
  innerleftmargin=5pt,
  innerrightmargin=5pt,
]{theoremframe}
\newmdenv[
  backgroundcolor=purple!10,
  linecolor=purple!100,
  linewidth=0.8pt,
  skipabove=2pt,
  skipbelow=2pt,
  innertopmargin=10pt,
  innerbottommargin=5pt,
  innerleftmargin=5pt,
  innerrightmargin=5pt,
]{hypothesisframe}
\newmdenv[
  backgroundcolor=orange!10,
  linecolor=orange!100,
  linewidth=0.8pt,
  skipabove=2pt,
  skipbelow=2pt,
  innertopmargin=10pt,
  innerbottommargin=5pt,
  innerleftmargin=5pt,
  innerrightmargin=5pt,
]{conjectureframe}
\newmdenv[
  backgroundcolor=cyan!10,
  linecolor=cyan!100,
  linewidth=0.8pt,
  skipabove=2pt,
  skipbelow=2pt,
  innertopmargin=10pt,
  innerbottommargin=5pt,
  innerleftmargin=5pt,
  innerrightmargin=5pt,
]{lemmaframe}
\newmdenv[
  backgroundcolor=magenta!10,
  linecolor=magenta!100,
  linewidth=0.8pt,
  skipabove=2pt,
  skipbelow=2pt,
  innertopmargin=10pt,
  innerbottommargin=5pt,
  innerleftmargin=5pt,
  innerrightmargin=5pt,
]{corollaryframe}
\newmdenv[
  backgroundcolor=lime!10,
  linecolor=lime!100,
  linewidth=0.8pt,
  skipabove=2pt,
  skipbelow=2pt,
  innertopmargin=10pt,
  innerbottommargin=5pt,
  innerleftmargin=5pt,
  innerrightmargin=5pt,
]{exampleframe}
\newmdenv[
  backgroundcolor=pink!10,
  linecolor=pink!100,
  linewidth=0.8pt,
  skipabove=2pt,
  skipbelow=2pt,
  innertopmargin=10pt,
  innerbottommargin=5pt,
  innerleftmargin=5pt,
  innerrightmargin=5pt,
]{notationframe}
\newmdenv[
  backgroundcolor=violet!10,
  linecolor=violet!100,
  linewidth=0.8pt,
  skipabove=2pt,
  skipbelow=2pt,
  innertopmargin=10pt,
  innerbottommargin=5pt,
  innerleftmargin=5pt,
  innerrightmargin=5pt,
]{claimframe}
\newmdenv[
  backgroundcolor=salmon!10,
  linecolor=salmon!100,
  linewidth=0.8pt,
  skipabove=2pt,
  skipbelow=2pt,
  innertopmargin=10pt,
  innerbottommargin=5pt,
  innerleftmargin=5pt,
  innerrightmargin=5pt,
]{problemframe}
\newmdenv[
  backgroundcolor=lavender!10,
  linecolor=lavender!100,
  linewidth=0.8pt,
  skipabove=2pt,
  skipbelow=2pt,
  innertopmargin=10pt,
  innerbottommargin=5pt,
  innerleftmargin=5pt,
  innerrightmargin=5pt,
]{observationframe}
\newenvironment{assumption}
{\begin{assumptionframe}\begin{innerassumption}}
      {\end{innerassumption}\end{assumptionframe}}
\newenvironment{theorem}
{\begin{theoremframe}\begin{innertheorem}}
      {\end{innertheorem}\end{theoremframe}}
\newcommand{\nonumsection}[1]{
  \section*{#1} 
  \addcontentsline{toc}{section}{#1} 
  \markboth{#1}{#1} 
}
\icmltitlerunning{Submission and Formatting Instructions for ICML 2025}
\begin{document}

\twocolumn[
    \icmltitle{Maximizing Intermediate Checkpoint Value in LLM Pretraining with Bayesian Optimization}

    \icmlsetsymbol{equal}{*}

    \begin{icmlauthorlist}
        \icmlauthor{Deyuan Liu$^\dagger$}{equal,yyy}
        \icmlauthor{Zecheng Wang$^\dagger$}{equal,yyy}\\
        \icmlauthor{Bingning Wang}{comp1}
        \icmlauthor{Weipeng Chen}{comp}
        \icmlauthor{Chunshan Li}{yyy}
        \icmlauthor{Zhiying Tu}{yyy}
        \icmlauthor{Dianhui Chu}{yyy}
        \icmlauthor{Dianbo Sui$^{(\textrm{\Letter})}$}{yyy}
    \end{icmlauthorlist}

    \icmlaffiliation{yyy}{Harbin Institute of Technology}
    \icmlaffiliation{comp}{Independent Researcher}
    \icmlaffiliation{comp1}{Tencent Wechat}
    
    \icmlcorrespondingauthor{Dianbo Sui}{suidianbo@hit.edu.cn}
    
    \vspace{-2mm} 
    \begin{center}
        {\footnotesize $^\dagger$deyuanliu@stu.hit.edu.cn \quad $^\dagger$zechengwang@stu.hit.edu.cn}
    \end{center}

    \icmlkeywords{Machine Learning, ICML}

    \vskip 0.2in
]



\printAffiliationsAndNotice{\icmlEqualContribution} 

\begin{abstract}
The rapid proliferation of large language models (LLMs), such as GPT-4 and Gemini, underscores the intense demand for resources during their training processes, posing significant challenges due to substantial computational and environmental costs. In this paper, we introduce a novel checkpoint merging strategy aimed at making efficient use of intermediate checkpoints during LLM pretraining. This method utilizes intermediate checkpoints with shared training trajectories, and is rooted in an extensive search space exploration for the best merging weight via Bayesian optimization. Through various experiments, we demonstrate that: (1) Our proposed methodology exhibits the capacity to augment pretraining, presenting an opportunity akin to obtaining substantial benefits at minimal cost; (2) Our proposed methodology, despite requiring a given held-out dataset, still demonstrates robust generalization capabilities across diverse domains, a pivotal aspect in pretraining. 
\looseness=-1
\end{abstract}

\setlength{\parskip}{3pt plus3pt minus0pt}

\section{Introduction}
With the rapid development of LLMs, such as GPT-3~\cite{openai2023gpt}, GPT-4~\cite{openai2023gpt4}, PaLM~\cite{chowdhery2023palm} and 
Gemini~\cite{geminiteam2023gemini}, which boasts tens to hundreds of billions of parameters, the demand for new LLMs and the research aimed at enhancing their capabilities have significantly increased. But we should note that the training requirements for these LLMs are substantial, not only in terms of computational resources, human resources, and capital resources, but also regarding energy consumption and environmental impact. For instance, training the LLaMA2 70B model with 2T tokens necessitates 1,720,320 GPU hours ~\cite{touvron2023llama}, and the development of a transformer with  213 million parameters through neural architecture search can lead to environmental burdens equivalent to the lifetime $\text{CO}_2$ emissions of five cars over their entire lifespans~\cite{strubell2019energy,faiz2023llmcarbon}. Consequently, making efficient use of checkpoints and the intermediate stages of the pretraining process has emerged as a key challenge in this field.
\looseness=-1

\begin{figure}[t] 
    \centering 
    \includegraphics[width=1\linewidth]{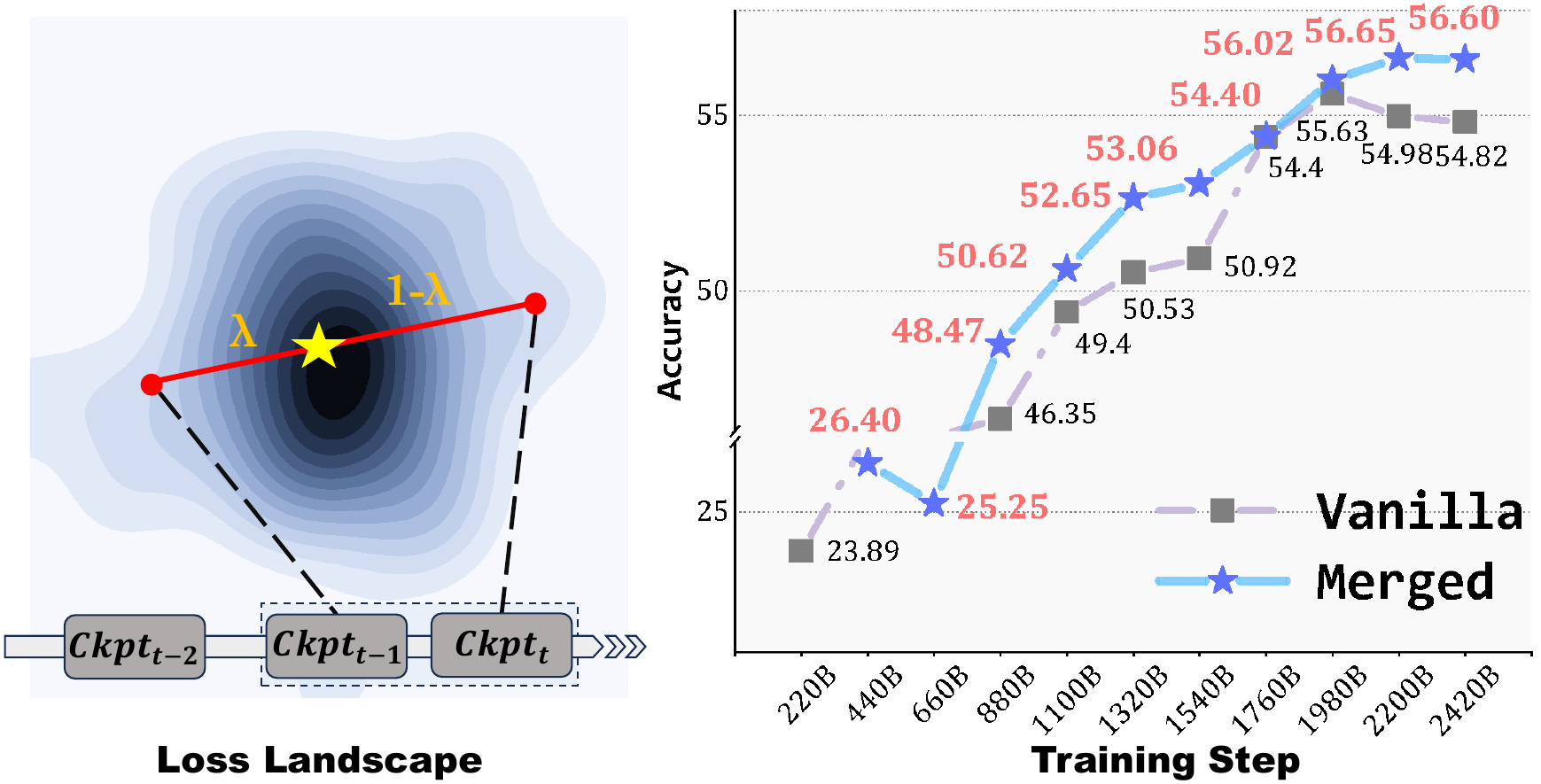}
    \caption{\textbf{Overview of the Bayesian optimization framework for checkpoint merging in LLM pretraining.} 
    The framework operates by linearly combining intermediate checkpoints $\Theta_{t-1}$ and $\Theta_t$ with optimized merging weights $\lambda_t$. 
    Through iterative Bayesian optimization, the method identifies performance \textcolor{purple}{"sweet spots"} in the loss landscape that enhance model efficacy without much additional computational resources, effectively transforming intermediate checkpoints into improved models.}
    \label{fig:overview}
    \vspace{-10pt}
\end{figure}

In response to this challenge, researchers have adopted various strategies in LLM pretraining, including mixed-precision training~\cite{shoeybi2019megatron}, zero-redundancy optimizer~\cite{rajbhandari2020zero}, continuous retraining~\cite{qin2022elle}, pipeline parallelism~\cite{liu2023hanayo}, and depth up-scaling methods~\cite{kim2023solar}. Although these approaches contribute to efficient pretraining by optimizing model architecture and processes, they primarily focus on structural or optimization improvements rather than directly enhancing the utilization of intermediate checkpoints in the pretraining phase~\cite{hou-etal-2022-token}.
\looseness=-1

Unlike these studies, we focus on the model merging strategy, a classic topic in machine learning~\cite{utans1996weight,chen2017checkpoint,wortsman2022model,akiba2024evolutionary,li2023merge,yang2023adamerging},  to enhance LLM pretraining in this paper. In particular, we employ checkpoints saved during pretraining and average these checkpoint parameters to improve pretraining without requiring substantial resources, since merged checkpoints can reduce the variance of the combined output relative to the output of the individual checkpoint while not increasing the bias~\cite{utans1996weight}.
\looseness=-1

However, conducting checkpoint merging is not trivial in pretraining, because different local minima may be found in averaging parameters~\cite{utans1996weight,chen2017checkpoint}. Therefore it is important to investigate the basic characters of checkpoint merging and wisely determine the merging weight.
\looseness=-1

\begin{figure}[t] 
    \centering 
    \includegraphics[width=0.99\linewidth]{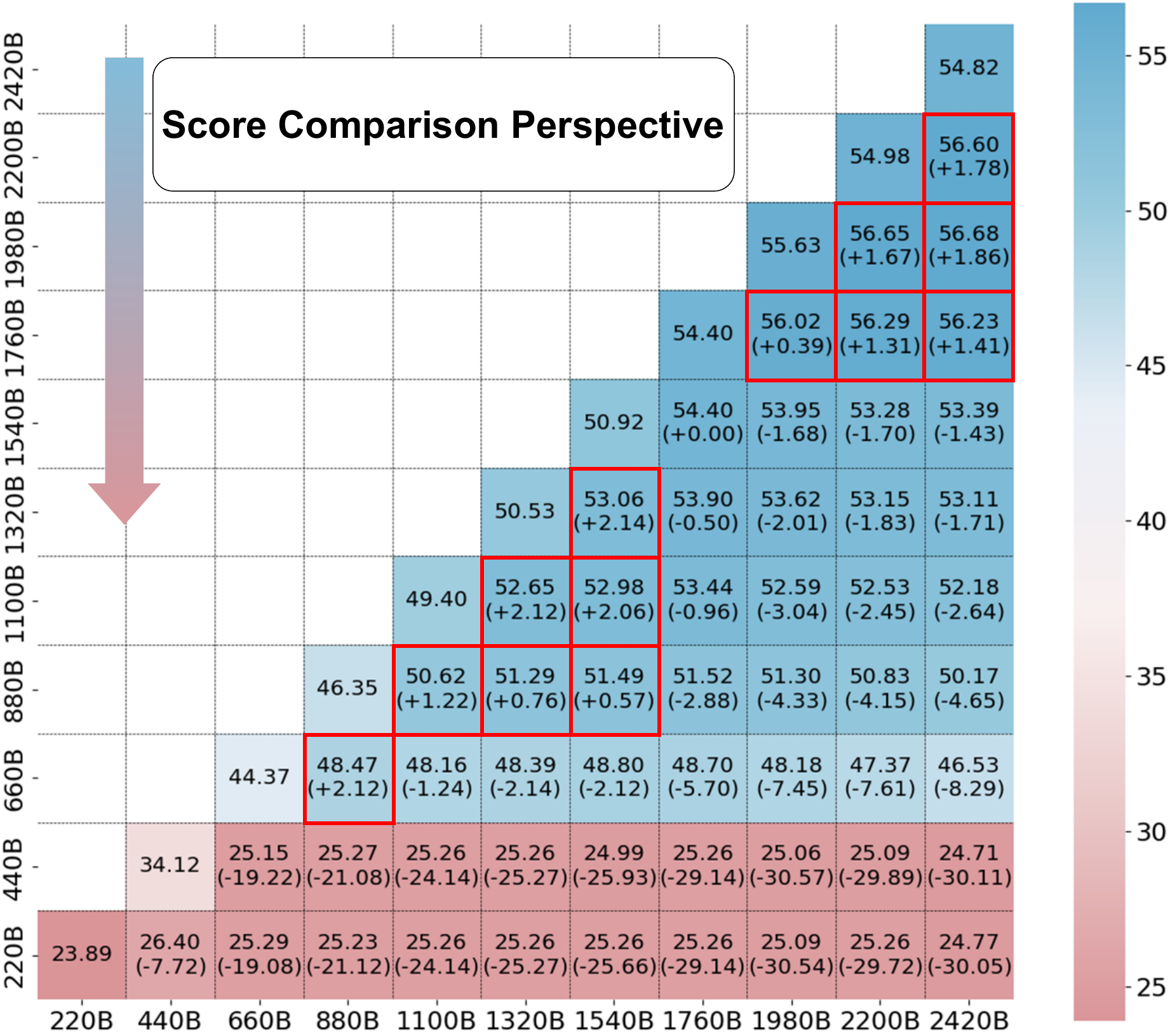}
    \caption{\textbf{Performance landscape of pairwise checkpoint merging using the Greedy Soup method on the C-Eval benchmark across 11 Baichuan2 checkpoints spanning 200B to 2640B tokens.} 
    The heatmap reveals that merging adjacent checkpoints \textcolor{purple}{(near the diagonal)} generally yields superior performance, while merging distant checkpoints results in significant performance degradation.}
    \label{fig:1st_ceval}
    \vspace{-10pt}
\end{figure}

\textbf{Our Approach.} To this end, we make the following effort: (1) we conduct some pilot experiments to explore the characters of checkpoint merging; (2) Based on the findings in the pilot experiments, we propose a method rooted in  Bayesian optimization to find the optimal or near-optimal merging weight. In detail,  we first explore two research questions: \textbf{\emph{Which checkpoints in the pretraining trajectory should be merged?}} and \textbf{\emph{How to merge checkpoint?}} via various pilot experiments. Then, based on findings in pilot experiments, we leverage Bayesian optimization to optimize the expensive, black-box, and derivative-free objective function of checkpoint merging, and determine the checkpoint merging weight. 
\looseness=-1

Through various experiments, we mainly find that: 
\vspace{-0.10in}
\begin{enumerate}[label=(\alph*), leftmargin=*, itemsep=0mm]
    \item Our proposed approach has the potential to enhance pretraining by efficiently utilizing intermediate checkpoints; 
    \item Besides superior performance, the merged soup \footnote{According to \citet{wortsman2022model}, the merged result is called “soup”.}, determined by a specific held-out dataset same as \citet{wortsman2022model,matena2022merging}, still exhibits strong generalization capabilities across various domains, a crucial aspect in pretraining.
\end{enumerate}
\vspace{-0.10in}
\textbf{Contributions:}
In summary, the contribution of this paper is threefold: 
\vspace{-0.10in}
\begin{enumerate}[label=(\alph*), leftmargin=*, itemsep=0mm]
    \item We propose merging checkpoints in the pretraining trajectory to make efficient use of checkpoints, offering substantial improvements without additional resource requirements. 
    \vspace{-0.05in}
    \item To find the optimal merging weight, we leverage Bayesian optimization, which excels at optimizing expensive black-box derivative-free objective functions. 
    \vspace{-0.15in}
    \item Through various experiments, we denote our method exhibits superior performance and the newly merged checkpoint maintains strong generalization across different domains.
\end{enumerate}

\section{Pilot Experiments}

We conducted comprehensive pilot experiments to address two fundamental research questions that guide our checkpoint merging strategy:

\begin{mdframed}[
    hidealllines=true,
    backgroundcolor=blue!10,
    innerleftmargin=10pt,
    innerrightmargin=10pt,
    innertopmargin=10pt,
    innerbottommargin=10pt,
    roundcorner=5pt,
    linewidth=1pt,
    linecolor=blue
]

\noindent
\textbf{\(\mathbb{RQ}1\): \emph{Which checkpoints along the pretraining trajectory should be merged?}}

\textbf{\(\mathbb{RQ}2\): \emph{How should these checkpoints be merged optimally?}}

\end{mdframed}

\subsection{Experimental Setup}
To systematically investigate these research questions, we selected eleven representative checkpoints from the Baichuan2 model~\citep{yang2023baichuan}, spanning a comprehensive range from \emph{200B} to \emph{2640B} tokens during pretraining. We evaluated the merged checkpoints using C-Eval~\citep{huang2023ceval}, a rigorous benchmark encompassing \emph{52} subjects across four difficulty levels, providing comprehensive coverage of language understanding capabilities. All merging experiments employed the greedy soup strategy~\citep{wortsman2022model}, where checkpoints are combined sequentially, with each checkpoint added only if it demonstrates measurable improvement in accuracy on a held-out development set.

\vspace{-10pt}
\subsection{\(\mathbb{RQ}1\): Strategic Identification of Mergeable Checkpoints}
To systematically address \(\mathbb{RQ}1\), we conducted an exhaustive exploration of all pairwise combinations among the eleven Baichuan2 checkpoints, yielding \emph{55} distinct merging scenarios. The performance of these merged models was rigorously evaluated on the C-Eval test set, with results visualized in \figref{fig:1st_ceval}.

\vspace{-10pt}
\paragraph{Key Findings.}
Our analysis reveals that merges involving \textbf{adjacent checkpoints} in the pretraining trajectory consistently led to substantial performance improvements compared to individual models. Most notably, merging Baichuan2-1980B with Baichuan2-2200B achieved an accuracy of \textcolor{red}{\textbf{56.65\%}}, significantly outperforming Baichuan2-2200B alone (\textcolor{orange}{\textbf{54.98\%}}). Remarkably, this merged model surpassed even the final checkpoint, Baichuan2-2420B (\textcolor{orange}{\textbf{54.82\%}}), by a substantial margin of \textcolor{red}{\textbf{1.83\%}}. These encouraging trends were consistently observed across the CMMLU benchmark (detailed analysis in \appref{cmmlu}).

Conversely, merging \textbf{distant checkpoints} often resulted in severe performance degradation. For instance, combining the early-stage Baichuan2-220B with the mature Baichuan2-2200B yielded an accuracy of merely \textcolor{red}{\textbf{25.26\%}}, only marginally exceeding the undertrained Baichuan2-220B baseline (\textcolor{orange}{\textbf{23.89\%}}), indicating destructive interference between disparate training states.

\begin{figure}[t] 
    \centering 
    \includegraphics[width=0.99\linewidth]{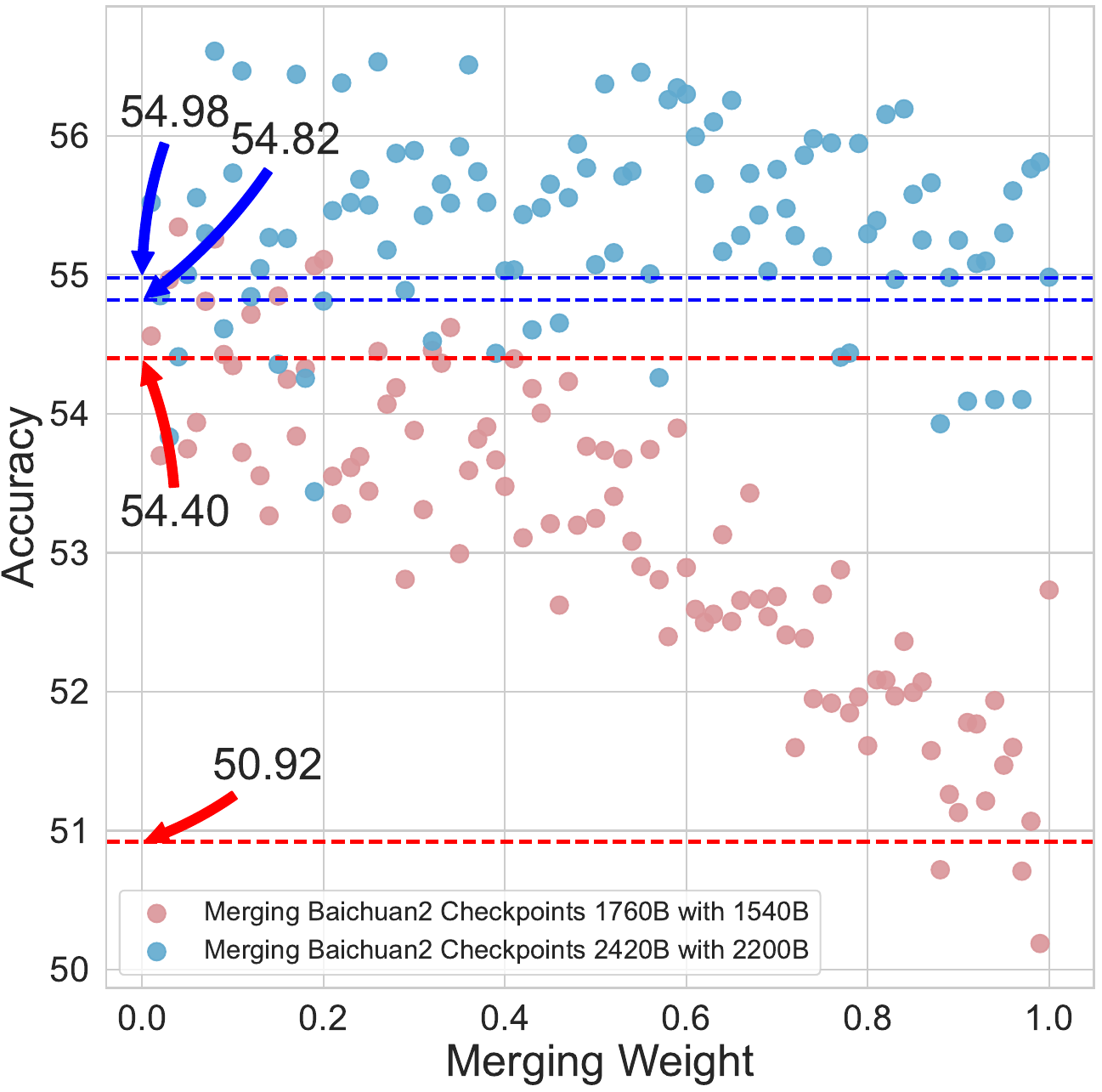}
    \caption{\textbf{Impact of varying merging weights on model performance when combining two representative checkpoint pairs:} \textcolor{red}{Baichuan2-1540B with Baichuan2-1760B} and \textcolor{blue}{Baichuan2-2200B with Baichuan2-2420B}. 
    The graph illustrates accuracy on the C-Eval dataset as a function of uniformly sampled merging weights ranging from 0 to 1. 
    The results demonstrate distinct patterns: for checkpoints with performance gaps, optimal weights favor the stronger model, while for similarly performing checkpoints, a broad range of weights \textcolor{purple}{\textbf{(76\%)}} can yield improvements, highlighting the complexity of the optimization landscape.}
    \label{fig:3st_ceval}
    \vspace{-10pt}
\end{figure}

\vspace{-10pt}
\subsection{\(\mathbb{RQ}2\): Optimal Strategies for Merging Checkpoints}
To explore \(\mathbb{RQ}2\), we examined the impact of varying merging weights when combining two representative pairs of checkpoints: Baichuan2-1540B with Baichuan2-1760B, and Baichuan2-2200B with Baichuan2-2420B. For each pair, we uniformly sampled 100 weights from the interval [0, 1] and evaluated each merged model on C-Eval, as illustrated in \figref{fig:3st_ceval}.

\vspace{-10pt}
\paragraph{Results for Distant Checkpoints.}
When merging checkpoints with a substantial performance gap, such as Baichuan2-1540B and Baichuan2-1760B, the accuracy increased smoothly as the weight on the stronger checkpoint (Baichuan2-1760B) increased. Notably, \textcolor{red}{\textbf{13\%}} of the tested weights resulted in improvements beyond the stronger base model.

\vspace{-10pt}
\paragraph{Results for Similar Checkpoints.}
In contrast, when merging two similarly strong checkpoints (Baichuan2-2200B and Baichuan2-2420B), the relationship between merging weight and performance did not follow a strict monotonic pattern. Instead, a broad range of weights \textbf{\textcolor{purple}{76\%}} led to performance improvements over the stronger checkpoint. A similar pattern was observed in experiments merging DeepSeek’s 7B checkpoints~\citep{deepseekai2024deepseek} at 1800B and 2000B tokens (see \appref{deepseek}).

\vspace{-10pt}
\paragraph{Summary}
Our pilot experiments provide valuable insights into the strategies for merging model checkpoints. Specifically, merging adjacent checkpoints along the pretraining trajectory generally enhances performance, while merging distant checkpoints can be detrimental. Additionally, the choice of merging weights plays a crucial role, especially when combining checkpoints of similar strength, where a wide range of weights can yield performance gains. These findings inform our approach to checkpoint merging in subsequent methods.

\vspace{-10pt}
\paragraph{Additional Analyses}
Further analyses on the CMMLU benchmark and DeepSeek’s checkpoints are provided in \appref{cmmlu} and \appref{deepseek}, respectively, demonstrating the generalizability of our findings across different evaluation metrics and model configurations.

\section{A Bayesian Approach to Checkpoint Merging in LLM Pretraining}
\label{sec:methodology}

\begin{algorithm*}[t]
\caption{Checkpoint Merging via Bayesian Optimization}
\label{algorithm}
\begin{algorithmic}[1]
\STATE \textbf{Input:} Initial checkpoints $\Theta_{t-1}$, $\Theta_t$, validation dataset $\mathcal{D}$, search bounds $[\alpha, 1]$, number of iterations $N$
\STATE Evaluate initial merging weights $\lambda_t^{(i)}$ (e.g., $\lambda_t^{(1)} = \alpha$, $\lambda_t^{(2)} = 1$) and collect observations $\mathcal{O} = \{ (\lambda_t^{(i)}, f(\lambda_t^{(i)})) \}_{i=1}^{k_0}$
\FOR{$k = k_0 + 1$ to $N$}
    \STATE Fit a Gaussian Process (GP) to the current observations $\mathcal{O}$
    \STATE Select the next merging weight $\lambda_t^{(k)} = \arg\max_{\lambda_t \in [\alpha, 1]} A(\lambda_t)$ using the acquisition function $A$
    \STATE Merge checkpoints: $\widetilde{\Theta}_t^{(k)} = \lambda_t^{(k)} \Theta_t + (1 - \lambda_t^{(k)}) \Theta_{t-1}$
    \STATE Evaluate the performance $f(\lambda_t^{(k)})$ of $\widetilde{\Theta}_t^{(k)}$ on the validation dataset $\mathcal{D}$
    \STATE Update the observations: $\mathcal{O} = \mathcal{O} \cup \{(\lambda_t^{(k)}, f(\lambda_t^{(k)}))\}$
\ENDFOR
\STATE \textbf{Output:} Optimal merging weight $\lambda_t^* = \arg\max_{\lambda_t} \{ f(\lambda_t) \mid (\lambda_t, f(\lambda_t)) \in \mathcal{O} \}$
\end{algorithmic}
\end{algorithm*}

In this section, we present our novel framework for checkpoint merging during the pretraining of large language models (LLMs). We begin by formally defining the checkpoint merging process. Next, we introduce our optimized approach that leverages Bayesian optimization to determine the optimal merging weights. Finally, we explore the advantages of our method in generalization, with detailed proofs provided in ~\appref{appendix:bounds}.

\subsection{Checkpoint Merging}
\label{sec:checkpoint_merging}

During the pretraining of LLMs, the model periodically saves checkpoints denoted as $\{\Theta_1, \Theta_2, \ldots, \Theta_t\}$, representing the model parameters at various training iterations. A straightforward strategy to enhance model performance involves linearly combining these checkpoints in the parameter space, a technique commonly referred to as \textit{Checkpoint Soup}. This can be mathematically expressed as:
\begin{equation}
\widetilde{\Theta}_t = \sum\limits_{i=1}^{t} \lambda_i \Theta_i \quad \textbf{s.t.} \quad \sum_{i=1}^{t} \lambda_i = 1
\end{equation}
where $\lambda_i$ are the merging weights assigned to each checkpoint.

However, as the number of checkpoints increases, efficiently utilizing them becomes increasingly challenging due to the high-dimensional optimization problem that emerges from the growing number of weighting coefficients $\{\lambda_i\}$. In the general scenario, merging $t$ checkpoints necessitates the optimization of $t$ weights $\{\lambda_1, \lambda_2, \ldots, \lambda_t\}$ subject to the constraint $\sum_{i=1}^{t} \lambda_i = 1$. This constitutes a $(t-1)$-dimensional optimization problem, which rapidly escalates in complexity as $t$ grows. The exponential expansion of the search space renders exhaustive search and grid search methods computationally impractical for large $t$. And we empirically validate the efficacy of the pairwise merging strategy, which show in ~\appref{app:num}.

To address this, we adopt a \textit{pairwise merging} strategy, which significantly simplifies the optimization landscape by restricting the merging process to only the two most recent checkpoints at each step. This approach effectively reduces the problem to a one-dimensional search over the merging weight $\lambda_t$, thereby enhancing computational efficiency and scalability. Formally, the pairwise merging process is defined as:
\begin{equation}
\widetilde{\Theta}_t = \lambda_t \Theta_t + (1-\lambda_t) \Theta_{t-1}
\label{eq:pairwise_merge}
\end{equation}
where $\lambda_t \in [\alpha, 1]$ and $\alpha \in (0, 1)$ serves as a lower bound to constrain the search space.

Empirical evidence indicates that the merged checkpoint $\widetilde{\Theta}_t$ can outperform the most recent checkpoint $\Theta_t$ when an optimal or near-optimal $\lambda_t$ is selected. The primary challenge lies in accurately determining this optimal merging weight $\lambda_t$, which we address by leveraging Bayesian optimization to maximize the effective utilization of intermediate checkpoints in the subsequent subsection.

Furthermore, our theoretical analysis (~\appref{appendix:bounds}) provides insights into why checkpoint merging can enhance model performance and convergence. Under realistic assumptions about the neural network loss landscape (such as smoothness, quadratic approximation near local minima, and bounded Hessians), we derive tighter bounds on the performance of the merged model $\widetilde{\Theta}_t$. Specifically, we establish that:
\begin{align}
    f(\widetilde{\Theta}_t) \approx \lambda_t f(\Theta_t) + (1 - \lambda_t) f(\Theta_{t-1}) \pm \Delta_t
    \label{eq:performance_bound_method}
\end{align}
where $f(\Theta)$ denotes the expected performance metric (e.g., accuracy), and $\Delta_t$ captures higher-order terms related to the curvature of the loss landscape and the distance between checkpoints.

Besides, our convergence analysis (~\appref{appendix:convergence}) demonstrates that checkpoint merging can influence the convergence behavior of gradient-based optimization algorithms. By interpolating between consecutive checkpoints, merging can effectively perform larger steps in parameter space, potentially accelerating convergence under certain conditions. Additionally, merging can smooth out fluctuations in the optimization trajectory caused by stochastic gradients, leading to more stable convergence.

\subsection{Bayesian Optimization for Determining Merging Weights}
\label{sec:bayesian_optimization}

Accurately determining the optimal merging weight $\lambda_t$ that maximizes the model's performance is the central challenge in checkpoint merging. To address this, we employ Bayesian optimization, an effective global optimization strategy for functions that are expensive to evaluate or lack closed-form expressions~\cite{frazier2018tutorial}.

\paragraph{Optimization Objective}
\label{sec:optimization_objective}

We formulate the optimization problem as:
\begin{equation}
    \lambda_t^* = \arg\max_{\lambda_t \in [\alpha, 1]} f\big( \widetilde{\Theta}_t(\lambda_t) \big)
    \label{eq:optimization_problem}
\end{equation}
where $f\big( \widetilde{\Theta}_t(\lambda_t) \big)$ represents the performance of the merged model on a validation dataset as a function of the merging weight $\lambda_t$. The performance metric could be accuracy, perplexity, or any relevant evaluation metric for LLMs.

\paragraph{Search Space Constraint}

By setting $\lambda_t \in [\alpha, 1]$, we constrain the search space to merging weights that retain a significant contribution from the latest checkpoint $\Theta_t$. This aligns with practical observations that heavily weighting $\Theta_t$ often leads to better performance, as it captures the most recent training progress.

\paragraph{Gaussian Process Regression}

We model the objective function $f(\lambda_t)$ using Gaussian process (GP) regression~\cite{rasmussen2003gaussian}, which provides a probabilistic framework for modeling unknown functions. The GP defines a prior over functions, characterized by a mean function $\mu_0(\lambda_t)$ and a covariance function $k(\lambda_t, \lambda_t')$. We assume:
\begin{equation}
    f(\lambda_t) \sim \mathcal{GP}\left( \mu_0(\lambda_t), k(\lambda_t, \lambda_t') \right)
    \label{eq:gp_model_method}
\end{equation}
where $\mu_0(\lambda_t)$ is the prior mean (often set to zero), and $k(\lambda_t, \lambda_t')$ is the kernel function encoding assumptions about the smoothness of $f(\lambda_t)$.

We collect a set of observations $\mathcal{D}_\text{obs} = \{ (\lambda_t^{(i)}, f^{(i)}) \}_{i=1}^N$ by evaluating the performance at $N$ different merging weights. The GP posterior mean $\mu_N(\lambda_t)$ and variance $\sigma_N^2(\lambda_t)$ are updated based on these observations:
\begin{align}
    \mu_N(\lambda_t) &= \mu_0(\lambda_t) + k^\top (\lambda_t) K^{-1} (\mathbf{f} - \boldsymbol{\mu}_0 ), \label{eq:posterior_mean_method} \\
    \sigma_N^2(\lambda_t) &= k(\lambda_t, \lambda_t) - k^\top (\lambda_t) K^{-1} k(\lambda_t) \label{eq:posterior_variance_method}
\end{align}
where $k(\lambda_t) = [k(\lambda_t, \lambda_t^{(1)}), \ldots, k(\lambda_t, \lambda_t^{(N)})]^\top$, $K$ is the covariance matrix of the observations with elements $K_{ij} = k(\lambda_t^{(i)}, \lambda_t^{(j)})$, and $\mathbf{f}$, $\boldsymbol{\mu}_0$ are vectors of observed performance values and prior means, respectively.

Our theoretical analysis (~\appref{appendix:gp_convergence}) demonstrates that using GP-based Bayesian optimization can efficiently converge to the optimal merging weight $\lambda_t^*$. By capturing the function's behavior, including any non-convexities or multimodalities, GPs enable the discovery of $\lambda_t$ values that yield better performance than those found via grid or random search.

\paragraph{Acquisition Function Selection}
\label{sec:acquisition_function}

The acquisition function determines the next merging weight $\lambda_t$ to evaluate by balancing exploration of the search space and exploitation of known high-performing regions. We consider three acquisition functions: Expected Improvement (EI), Probability of Improvement (PI), and Upper Confidence Bound (UCB). These are formally defined as:
\begin{equation}
A_i(\lambda_t) = \begin{cases} 
\mathbb{E}_k\left[\max(f(\lambda_t) - f_{*}^k, 0)\right], & \text{for EI} \\
\mathbb{P}(f(\lambda_t) > f_{*}^k), & \text{for PI} \\
\mu_k(\lambda_t) + \beta \cdot \sigma_k(\lambda_t), & \text{for UCB}
\end{cases}
\end{equation}
where $A_i(\lambda_t)$ represents the acquisition function used to select the next weight $\lambda_t^{(k+1)}$, $f_{*}^k$ is the best observed performance up to iteration $k$, $\mu_k$ and $\sigma_k$ denote the mean and standard deviation of the posterior distribution at $\lambda_t$, respectively, and $\beta$ is a tunable parameter controlling the exploration-exploitation trade-off.

To dynamically select the most promising acquisition function, we employ the GP-Hedge strategy, which combines multiple acquisition functions based on their past performance. At each iteration $k$, GP-Hedge updates the cumulative reward $R_i(k)$ for each acquisition function $i$:
\begin{align}
A_{\text{GP-Hedge}}(\lambda_t) = \sum_{i=1}^{M} \frac{\exp(\eta R_i(k))}{\sum_{j=1}^{M} \exp(\eta R_j(k))} \cdot A_i(\lambda_t)
\end{align}
where $M$ is the number of acquisition functions (here, $M=3$), $A_i(\lambda_t)$ represents the individual acquisition functions (EI, PI, or UCB), $R_i(k)$ is the cumulative reward for acquisition function $i$ at iteration $k$, and $\eta$ controls the learning rate. This strategy allows the acquisition function to adapt dynamically, leveraging the strengths of each component based on their historical performance.

The overall procedure is summarized in ~\algref{algorithm}. We begin with initial observations (e.g., evaluating $\lambda_t = \alpha$ and $\lambda_t = 1$) and iteratively select new merging weights using Bayesian optimization until convergence or a predefined budget of evaluations.

\subsection{Impact on Model Generalization}
\label{sec:model_generalization}

Checkpoint merging aims not only to improve performance on the validation set but also to enhance generalization to unseen data. Our analysis (~\appref{appendix:generalization}) employs the PAC-Bayesian framework to derive a generalization bound for the merged model $\widetilde{\Theta}_t$.

\begin{table*}[!t]
\centering
\caption{\textbf{Comprehensive performance comparison of different checkpoint merging methods applied to Baichuan2 and DeepSeek models across multiple benchmark datasets.}
The table evaluates four checkpoint pairs using various merging strategies: Uniform Soup, Greedy Soup, Fisher Weighted Averaging, RegMean, and our proposed Bayesian Optimization-based method. 
Underlined scores indicate the highest performance achieved by baseline methods within each model pair. 
Red highlighted improvements demonstrate our method's consistent superiority, achieving average improvements across different model pairs and maintaining robust performance across diverse evaluation metrics.}
\scalebox{0.73}{
\begin{tabular}{@{}l|cccccc|c@{}}
\toprule
\textbf{Dataset} & \textbf{Baichuan2-1980B}& \textbf{Baichuan2-2200B}& \textbf{Uniform Soup} & \textbf{Greedy Soup} & \textbf{Fisher} & \textbf{RegMean} & \cellcolor{gray!30}\textbf{Ours} \\ \midrule
C-Eval(5-shot)& 55.63                    & 54.98                     & 53.00                 & 55.63                & \underline{55.73}& 55.21            & \cellcolor{gray!30}\textbf{56.17(\textcolor{red}{+0.44})} \\
CMMLU(5-shot)            & 55.68                    & \underline{56.29}& 54.20                 & \underline{56.29}& 56.13          & 55.21            & \cellcolor{gray!30}\textbf{56.88(\textcolor{red}{+0.59})} \\
MMLU(5-shot)             & 54.00                    & 51.27                     & 54.30                 & \underline{55.39}& 54.25          & 54.77            & \cellcolor{gray!30}\textbf{55.44(\textcolor{red}{+0.05})} \\
GSM8K(4-shot)& 23.28                    & 21.99                     & \underline{23.96}& 23.28                & 20.92          & 23.73            & \cellcolor{gray!30}\textbf{24.02(\textcolor{red}{+0.29})} \\ \midrule
Average          & 47.15                    & 46.13                     & 46.37                 & 47.65                & \underline{46.76}& 47.23            & \cellcolor{gray!30}\textbf{48.13(\textcolor{red}{+0.48})} \\ \bottomrule
\end{tabular}}
\scalebox{0.73}{
\begin{tabular}{@{}l|cccccc|c@{}}
\toprule
\textbf{Dataset} & \textbf{Baichuan2-2200B}& \textbf{Baichuan2-2420B}& \textbf{Uniform Soup} & \textbf{Greedy Soup} & \textbf{Fisher} & \textbf{RegMean} & \cellcolor{gray!30}\textbf{Ours}   \\ \midrule
C-Eval(5-shot)           & 54.98                     & 54.82                     & 54.93                 & \underline{55.64}& 54.44          & 54.55            & \cellcolor{gray!30}\textbf{56.23(\textcolor{red}{+0.59})} \\
CMMLU(5-shot)            & 56.29                     & \underline{56.78}& 56.71                 & \underline{56.78}& 56.62          & 56.46            & \cellcolor{gray!30}\textbf{56.97(\textcolor{red}{+0.19})} \\
MMLU(5-shot)             & 51.27                     & 53.97                     & 54.62                 & \textbf{54.82}       & 54.16          & 54.77            & \cellcolor{gray!30}54.56(\textcolor{blue}{-0.26}) \\
GSM8K(4-shot)& 19.64                     & 21.00                     & 20.92                 & 21.92                & 22.44          & \underline{23.88}& \cellcolor{gray!30}\textbf{24.32(\textcolor{red}{+0.44})} \\ \midrule
Average          & 45.55                     & 46.64                     & 46.80                 & 47.29                & 46.92          & \underline{47.42}& \cellcolor{gray!30}\textbf{48.02(\textcolor{red}{+0.50})} \\ \bottomrule
\end{tabular}}
\scalebox{0.735}{
\begin{tabular}{@{}l|cccccc|c@{}}
\toprule
\textbf{Dataset} & \textbf{DeepSeek-1400B} & \textbf{DeepSeek-1600B} & \textbf{Uniform Soup} & \textbf{Greedy Soup} & \textbf{Fisher} & \textbf{RegMean} & \cellcolor{gray!30}\textbf{Ours}   \\ \midrule
C-Eval(5-shot)           & 38.80                     & 39.40                     & \underline{41.26}& 40.70                & 40.24          & 39.55            & \cellcolor{gray!30}\textbf{41.79(\textcolor{red}{+0.55})} \\
CMMLU(5-shot)            & 40.27                     & 40.94                     & 42.18                 & \underline{42.25}& 41.76          & 41.80            & \cellcolor{gray!30}\textbf{42.55(\textcolor{red}{+0.30})} \\
MMLU(5-shot)             & 41.94                     & 42.60                     & 43.87                 & \textbf{43.88}       & 43.95          & 43.27            & \cellcolor{gray!30}43.85(\textcolor{blue}{-0.03}) \\
GSM8K(4-shot)& 11.30                     & 13.27                     & 14.18                 & 14.03                & \underline{15.39}& 15.04            & \cellcolor{gray!30}\textbf{15.70(\textcolor{red}{+0.41})} \\ \midrule
Average          & 33.08                     & 34.05                     & \underline{35.37}& 35.22                & 35.34          & 34.92            & \cellcolor{gray!30}\textbf{35.97(\textcolor{red}{+0.53})} \\ \bottomrule
\end{tabular}}
\scalebox{0.735}{
\begin{tabular}{@{}l|cccccc|c@{}}
\toprule
\textbf{Dataset} & \textbf{DeepSeek-1800B} & \textbf{DeepSeek-2000B} & \textbf{Uniform Soup} & \textbf{Greedy Soup} & \textbf{Fisher} & \textbf{RegMean} & \cellcolor{gray!30}\textbf{Ours}    \\ \midrule
C-Eval(5-shot)           & 43.05                     & 44.36                     & 44.61& 44.70& \underline{44.81}& 43.95            & \cellcolor{gray!30}\textbf{45.82(\textcolor{red}{+1.01})}\\
CMMLU(5-shot)            & 45.31                     & 46.82                     & 46.84                 & 46.82                & 46.49          & \underline{47.12}& \cellcolor{gray!30}\textbf{47.15(\textcolor{red}{+0.03})} \\
MMLU(5-shot)             & 47.68                     & 49.29                     & 49.02                 & \underline{49.29}& 48.73          & 49.07            & \cellcolor{gray!30}\textbf{49.43(\textcolor{red}{+0.14})} \\
GSM8K(4-shot)& 16.60                     & 18.88                     & 17.82                 & \underline{18.88}& 18.73          & 18.56            & \cellcolor{gray!30}\textbf{19.04(\textcolor{red}{+0.22})} \\ \midrule
Average          & 38.16                     & 39.84                     & 39.57& \underline{39.92}& 39.69& 39.68            & \cellcolor{gray!30}\textbf{40.36(\textcolor{red}{+0.44})}\\ \bottomrule
\end{tabular}}
\label{tab:deepseek1800B}
\vspace{-10pt}
\end{table*}

\paragraph{PAC-Bayesian Generalization Bound}

We define the prior distribution $P$ and posterior distribution $Q$ over the model parameters as follows:
\begin{align}
    P = \mathcal{N}\left( \Theta_{t-1},\, \sigma_P^2 I \right), Q = \mathcal{N}\left( \widetilde{\Theta}_t,\, \sigma_P^2 I \right)
\end{align}
where $\mathcal{N}(\mu, \Sigma)$ denotes a Gaussian distribution, and $I$ is the identity matrix. The KL divergence between $Q$ and $P$ is:
\begin{equation}
    D_{\mathrm{KL}}(Q \,\|\, P) = \frac{ \lambda_t^2 }{ 2 \sigma_P^2 } \| \Theta_t - \Theta_{t-1} \|^2
    \label{eq:kl_divergence_method}
\end{equation}

Using the PAC-Bayesian generalization bound~\cite{mcallester1998some}, it holds with probability at least $1 - \delta$:
\begin{equation}
    \mathbb{E}_{\Theta \sim Q} \left[ \mathcal{L}_{\mathcal{D}}(\Theta) \right] \leq \mathcal{L}_{S}(Q) + \sqrt{ \frac{ D_{\mathrm{KL}}(Q \,\|\, P) + \ln \left( \frac{2 \sqrt{n}}{\delta} \right) }{ 2 n } }
    \label{eq:pac_bayes_method}
\end{equation}
where $\mathcal{L}_{\mathcal{D}}(\Theta)$ is the expected loss on the data distribution, $\mathcal{L}_{S}(Q)$ is the empirical loss on the training set, and $n$ is the number of training samples.

Since $\lambda_t \leq 1$, the KL divergence in (Eq. \eqref{eq:kl_divergence_method}) is reduced compared to using $\Theta_t$ alone. This leads to a tighter generalization bound in (Eq. \eqref{eq:pac_bayes_method}), indicating that checkpoint merging can improve generalization by effectively regularizing the model.

\section{Experiments}
\paragraph{Setup}\

\noindent 
\textbf{\emph{$\bullet$ Datasets \& LLM Checkpoints.}}
We evaluate our method using multiple pretraining checkpoints and a broad range of datasets. For models, we follow previous Baichuan2~\cite{yang2023baichuan} 7B, DeepSeek~\cite{deepseekai2024deepseek} 7B and Pythia ~\cite{biderman2023pythia}, ranging from 70M to 6.9B parameters models. For benchmarks, we evaluate on C-Eval~\cite{huang2023ceval}, CMMLU~\cite{li2023cmmlu}, MMLU~\cite{hendrycks2020measuring}, and GSM8K~\cite{cobbe2021training}, PIQA~\cite{bisk2020piqa}, WinoGrand~\cite{sakaguchi2021winogrande}, SciQ~\cite{welbl2017crowdsourcing}, and ARC-Easy~\cite{clark2018think}. 

\noindent
\textbf{\emph{$\bullet$ Baseline Merging Methods.}}
We compare against strong baseline merging methods, including Uniform Soup and Greedy Soup~\cite{wortsman2022model}, Fisher Weighted Averaging~\cite{matena2022merging}, and RegMean~\cite{jin2022dataless}. Uniform Soup evenly averages model parameters. Greedy Soup incrementally adds checkpoints that improve performance on a held-out set.

\vspace{-10pt}
\subsection{Main Results}

\tabref{tab:deepseek1800B} presents the comprehensive results of our merging experiments, highlighting two strategically selected checkpoint combinations that yield substantial performance enhancements. By merging the Baichuan2-1980B and Baichuan2-2200B checkpoints, our method achieves a remarkable \textcolor{red}{\textbf{0.59\%}} improvement on the CMMLU dataset compared to the Baichuan2-2200B checkpoint alone. Similarly, merging the Baichuan2-2200B and Baichuan2-2420B checkpoints results in a significant \textcolor{red}{\textbf{0.59\%}} improvement on the C-Eval dataset compared to the Baichuan2-2420B checkpoint. These results underscore the efficacy of our approach in enhancing model performance during the mid to late stages of pretraining.

\vspace{-10pt}
\paragraph{Comparison with Baseline Methods.}
Beyond surpassing individual baseline checkpoints, our method significantly outperforms existing merging baselines across diverse datasets. Specifically, on the CMMLU dataset, merging the Baichuan2-1980B and Baichuan2-2200B checkpoints using our approach leads to improvements of \textcolor{red}{\textbf{2.68\%}}, \textcolor{red}{\textbf{0.59\%}}, \textcolor{red}{\textbf{0.75\%}}, and \textcolor{red}{\textbf{1.67\%}} over Uniform Soup, Greedy Soup, Fisher Weighted Averaging, and RegMean, respectively. A consistent trend is observed on the C-Eval dataset, where our method outperforms the baselines by substantial margins, demonstrating superior effectiveness in checkpoint merging optimization.

\vspace{-10pt}
\paragraph{Cross-Architecture Generalizability.}
To validate the robustness and broad applicability of our proposed method, we applied it to the DeepSeek 7B architecture. The results, presented in \tabref{tab:deepseek1800B}, demonstrate that our method consistently enhances model performance across different pretraining stages and model architectures. This highlights the universal generalizability of our approach beyond the Baichuan2 family, establishing its effectiveness as a model-agnostic optimization strategy.

\vspace{-10pt}
\subsection{Generalization to Unseen Domains}

To rigorously assess the generalization capabilities of merged checkpoints to unseen domains, we evaluated the out-of-domain (OOD) performance of various merging methods. Although merging weights were determined using the C-Eval dataset (in-domain, IND), we tested the merged models on three diverse OOD datasets: CMMLU, MMLU, and GSM8K.

\tabref{tab:table_half_page} summarizes the results, revealing two critical insights:
\begin{enumerate}[label=(\alph*), leftmargin=*, itemsep=0mm]
    \vspace{-10pt}
    \item \textbf{Cross-Lingual Generalization:} Despite determining merging weights using a Chinese dataset (C-Eval), the merged checkpoints consistently perform well on English datasets such as MMLU and GSM8K. This indicates that merging pretraining checkpoints in parameter space preserves the models' ability to generalize across languages effectively.
    \item \textbf{Superior Stability and Cross-Domain Performance:} Our proposed method outperforms Greedy Soup and Fisher Weighted Averaging by exhibiting the smallest average absolute difference ($\Delta$) between IND and OOD performance. Specifically, our method achieves a $\Delta$ of \textcolor{red}{\textbf{0.87}}, compared to \textcolor{orange}{\textbf{2.01}} and \textcolor{orange}{\textbf{2.38}} for Greedy Soup and Fisher Weighted Averaging, respectively. This demonstrates that our Bayesian optimization approach identifies merging weights that are more optimal across different domains.
\end{enumerate}   

\begin{table}[t]
\centering
\caption{\textbf{Out-of-domain generalization analysis of merged checkpoint soups across language and task boundaries.} 
Merging weights are optimized using the C-Eval dataset (Chinese, in-domain), then evaluated on out-of-domain datasets: CMMLU, MMLU, and GSM8K.
The IND/OOD format shows in-domain versus out-of-domain performance. 
The $\Delta$ metric quantifies the total absolute performance difference across domains, where lower values indicate superior generalization stability. 
Our method achieves the smallest $\Delta$ \textcolor{purple}{\textbf{(0.87)}}, demonstrating robust cross-lingual and cross-task generalization capabilities compared to baseline approaches.}
\scalebox{0.82}{
\begin{tabular}{l|ccc}
\toprule
\textbf{Dataset} & \begin{tabular}[c]{@{}c@{}}\textbf{Greedy Soup} \\ (IND/OOD)\end{tabular} & 
\begin{tabular}[c]{@{}c@{}}\textbf{Fisher} \\ (IND/OOD)\end{tabular}  & \begin{tabular}[c]{@{}c@{}}\textbf{Ours} \\ (IND/OOD)\end{tabular} \\
\midrule
CMMLU & 56.78/56.78 & 56.62/56.72 & 56.97/56.91 \\
MMLU & 54.82/54.54 & 54.16/54.54 & 54.56/55.29 \\
GSM8K & 21.92/23.65 & 22.44/24.34 & 24.32/24.40 \\
\midrule
\ \ \ $\Delta (\downarrow)$
 & 2.01 & 2.38& \cellcolor{gray!30}\textbf{0.87}\\
\bottomrule
\end{tabular}}
\label{tab:table_half_page}
\vspace{-10pt}
\end{table}

\vspace{-10pt}
\subsection{Performance Across Different Model Sizes}
To evaluate whether our proposed merging method maintains effectiveness across LLMs with varying parameter sizes, we conducted comprehensive experiments on Pythia models ranging from \emph{70M} to \emph{6.9B} parameters. We assessed performance on PIQA, WinoGrande, SciQ, and ARC-Easy datasets, with detailed PIQA results presented in \tabref{tab:size}.

\vspace{-10pt}
\paragraph{Key Findings.}
Across all parameter sizes, our method consistently outperforms baseline merging approaches. For the Pythia 70M model, our method achieves a PIQA score of \textcolor{red}{\textbf{60.18\%}}, surpassing the best-performing baseline (RegMean) by \textcolor{red}{\textbf{1.22\%}}. Similar improvements are observed across larger models:
\vspace{-10pt}
\begin{enumerate}[label=(\alph*), leftmargin=*, itemsep=0mm]
    \item \textbf{Pythia 410M}: Our method achieves \textcolor{red}{\textbf{68.85\%}}, outperforming the best baseline by \textcolor{red}{\textbf{0.53\%}}.
    \item \textbf{Pythia 1.4B}: A score of \textcolor{red}{\textbf{71.84\%}} is achieved, which is \textcolor{red}{\textbf{0.80\%}} higher than RegMean.
    \item \textbf{Pythia 2.8B}: Our method attains \textcolor{red}{\textbf{75.71\%}}, an improvement of \textcolor{red}{\textbf{0.74\%}} over the best baseline.
    \item \textbf{Pythia 6.9B}: A consistent improvement is observed with \textcolor{red}{\textbf{76.56\%}}, surpassing the best baseline by \textcolor{red}{\textbf{0.12\%}}.
\end{enumerate}
\vspace{-5pt}
These results demonstrate that our merging method maintains effectiveness across diverse model scales, ensuring robust performance improvements regardless of the underlying LLM architecture size.

\begin{table*}[!t]
\centering
\caption{\textbf{Scalability analysis of checkpoint merging methods across Pythia models of varying parameter sizes (70M to 6.9B) evaluated on the PIQA dataset.} 
The table demonstrates the consistent effectiveness of our Bayesian optimization approach across different model scales, with improvements over the best baseline methods. 
The results validate that our merging strategy maintains its effectiveness independent of model size, ensuring robust performance improvements across the entire spectrum of modern LLM architectures.}
\scalebox{0.775}{
\begin{tabular}{@{}l|ccccc}
\toprule
\textbf{PIQA (5-shot)}& \textbf{Pythia 70M}& \textbf{Pythia 410M}& \textbf{Pythia 1.4B}& \textbf{Pythia 2.8B}& \textbf{Pythia 6.9B}\\ \midrule
Training step-142000 & 58.00& 68.06& 70.95& 74.81& 75.30\\
Training step-143000 & 58.54& 68.06& 70.89& 74.31& 76.44\\ \midrule
Unifrom Soup& 58.71& 68.06& 70.78& \underline{74.97}& 75.68\\
Greedy Soup& 58.71& 68.06& 70.57& 74.76& \underline{76.01}\\
Fisher& 58.69& 68.14& 70.87& 74.65& 75.94\\
RegMean& \underline{58.96}& \underline{68.32}& \underline{71.04}& 74.72& 75.87\\ \midrule
\cellcolor{gray!30}\textbf{Ours}& \cellcolor{gray!30}\textbf{60.18 (\textcolor{red}{+1.22})}& \cellcolor{gray!30}\textbf{68.85 (\textcolor{red}{+0.53})}& \cellcolor{gray!30}\textbf{71.84 (\textcolor{red}{+0.80})}& \cellcolor{gray!30}\textbf{75.71 (\textcolor{red}{+0.74})}& \cellcolor{gray!30}\textbf{76.56 (\textcolor{red}{+0.12})}\\ \bottomrule
\end{tabular}}
\end{table*}

\begin{table*}[!t]
\vspace{-10pt}
\centering
\caption{\textbf{Efficiency comparison of different search strategies for determining optimal merging weights when combining Baichuan2-1980B with Baichuan2-2200B checkpoints.} 
The evaluation compares Random Search, Greedy Search, Grid Search, and our Bayesian Optimization approach across four benchmark datasets. 
Our method consistently outperforms all baseline search strategies, achieving improvements over the best alternative methods, while requiring significantly fewer function evaluations due to the principled exploration-exploitation balance of Gaussian Process-based optimization.}
\scalebox{0.74}{
\begin{tabular}{@{}l|ccc|c|}
\toprule
\textbf{Baichuan2-1980B\&2200B}& \textbf{Random Search}& \textbf{Greedy Search}& \textbf{Grid Search}& \textbf{Ours}\\ \midrule
C-Eval(5-shot)           & \underline{55.64}& 55.49& 55.51& \cellcolor{gray!30}\textbf{56.73(\textcolor{red}{+1.09})}\\
CMMLU(5-shot)            & 55.67& \underline{56.74}& 55.51& \cellcolor{gray!30}\textbf{57.05(\textcolor{red}{+0.31})}\\
MMLU(5-shot)             & 54.16& \underline{54.45}& 54.35& \cellcolor{gray!30}\textbf{54.77(\textcolor{red}{+0.32})}\\
GSM8K(4-shot)& 20.56& \underline{21.83}& 20.85& \cellcolor{gray!30}\textbf{22.17(\textcolor{red}{+0.34})}\\ \midrule
Average          & 46.51& \underline{47.13}& 46.56& \cellcolor{gray!30}\textbf{47.68(\textcolor{red}{+0.55})}\\ \bottomrule
\end{tabular}}
\label{tab:search}
\vspace{-10pt}
\end{table*}

\subsection{Efficiency in Determining Optimal Merging Weights}

To evaluate the efficiency of our proposed method in identifying optimal merging weights, we compared it against various search strategies, including Random Search~\cite{zabinsky2009random}, Greedy Search, and Grid Search. \tabref{tab:search} presents the results of merging Baichuan2-1980B with Baichuan2-2200B across multiple datasets using different search methods.

\vspace{-10pt}
\paragraph{Performance Analysis.}
Our Bayesian Optimization (BayesOpt) based method consistently achieves the highest scores across all evaluated datasets:
\begin{enumerate}[label=(\alph*), leftmargin=*, itemsep=0mm]
    \item \textbf{C-Eval (5-shot)}: \textcolor{red}{\textbf{56.73\%}} (\textcolor{red}{\textbf{+1.09\%}}) compared to Random Search (\textcolor{orange}{\textbf{55.64\%}}).
    \item \textbf{CMMLU (5-shot)}: \textcolor{red}{\textbf{57.05\%}} (\textcolor{red}{\textbf{+0.31\%}}) outperforming Greedy Search (\textcolor{orange}{\textbf{56.74\%}}).
    \item \textbf{MMLU (5-shot)}: \textcolor{red}{\textbf{54.77\%}} (\textcolor{red}{\textbf{+0.32\%}}) surpassing Greedy Search (\textcolor{orange}{\textbf{54.45\%}}).
    \item \textbf{GSM8K (4-shot)}: \textcolor{red}{\textbf{22.17\%}} (\textcolor{red}{\textbf{+0.34\%}}) exceeding Greedy Search (\textcolor{orange}{\textbf{21.83\%}}).
\end{enumerate}
Overall, our method achieves an average score of \textcolor{red}{\textbf{47.68\%}}, outperforming Random Search (\textcolor{orange}{\textbf{46.51\%}}), Greedy Search (\textcolor{orange}{\textbf{47.13\%}}), and Grid Search (\textcolor{orange}{\textbf{46.56\%}}). This demonstrates that our Bayesian optimization approach effectively explores the search space and iteratively refines the search to identify optimal merging weights.

\vspace{-10pt}
\section{Ablation Study}
\subsection{Impact of Held-out Dataset Size on Checkpoint Merging}

Following established practices~\cite{wortsman2022model,matena2022merging}, our proposed method requires a validation set to determine optimal merging weights. We investigate the impact of held-out dataset size variations by extracting different fractions of the C-Eval validation data and testing merging performance for Baichuan2-2200B and Baichuan2-2420B checkpoints.

Results presented in \tabref{tab:dev} demonstrate that held-out dataset size exerts minimal influence on our method's efficacy, maintaining robust performance even with limited validation data. Performance remains stable across dataset fractions (\textcolor{blue}{\textbf{56.61\%}} to \textcolor{blue}{\textbf{56.08\%}}), indicating that our Bayesian optimization approach can effectively determine optimal merging weights with modest validation requirements, enhancing practical applicability.

\vspace{-10pt}
\subsection{Impact of Merging Weight Search Space Size}
Our merging strategy involves hyperparameter $\alpha$ controlling the merging weight search space size, as defined in Eq.~\eqref{eq:pairwise_merge}. We assess $\alpha$'s impact by setting values to \textcolor{blue}{\textbf{0.5}}, \textcolor{blue}{\textbf{0.7}}, and \textcolor{blue}{\textbf{0.9}}.

\figref{fig:search_space_size} illustrates performance as a function of search space size. Key observations include:

\vspace{-10pt}
\begin{enumerate}[label=(\alph*), leftmargin=*, itemsep=0mm]
    \item \textbf{Optimal $\alpha$ Values:} Setting $\alpha$ to \textcolor{blue}{\textbf{0.5}} or \textcolor{blue}{\textbf{0.7}} yields optimal performance, converging to merging weights within \textcolor{red}{\textbf{(0.87, 0.89)}}.
    \item \textbf{Excessive Constraint Detriment:} Setting $\alpha$ to \textcolor{blue}{\textbf{0.9}} leads to noticeable accuracy decline, indicating overly restrictive search spaces dilute optimization effectiveness.
    \item \textbf{Performance Gap Consideration:} For significant performance gaps between checkpoints, narrower search spaces (lower $\alpha$) prove beneficial. Conversely, for similar-performing checkpoints, broader search spaces allow greater flexibility in weight allocation.
\end{enumerate}

\begin{table}[h]
    \centering
    \caption{\textbf{Robustness analysis of our checkpoint merging method with respect to held-out validation dataset size.} 
    The experiment evaluates merging performance of Baichuan2-2200B and Baichuan2-2420B checkpoints on the C-Eval dataset using different fractions (1/4, 1/2, 3/4, full) of the validation data for weight optimization. 
    Results demonstrate minimal sensitivity to dataset size variations, indicating that our Bayesian optimization approach can effectively determine optimal merging weights even with limited validation data, enhancing the practical applicability of our method.}
    \begin{tabular}{@{}l|cccc}
    \toprule
     C-Eval& 1/4& 1/2& 3/4& 1\\ \midrule
    Ours& 56.61& 56.08& 55.80& 56.23\\
    \bottomrule
    \end{tabular}
    \label{tab:dev}
\end{table}
\vspace{-10pt}

\begin{figure}[t]
    \centering
    \includegraphics[width=1\linewidth]{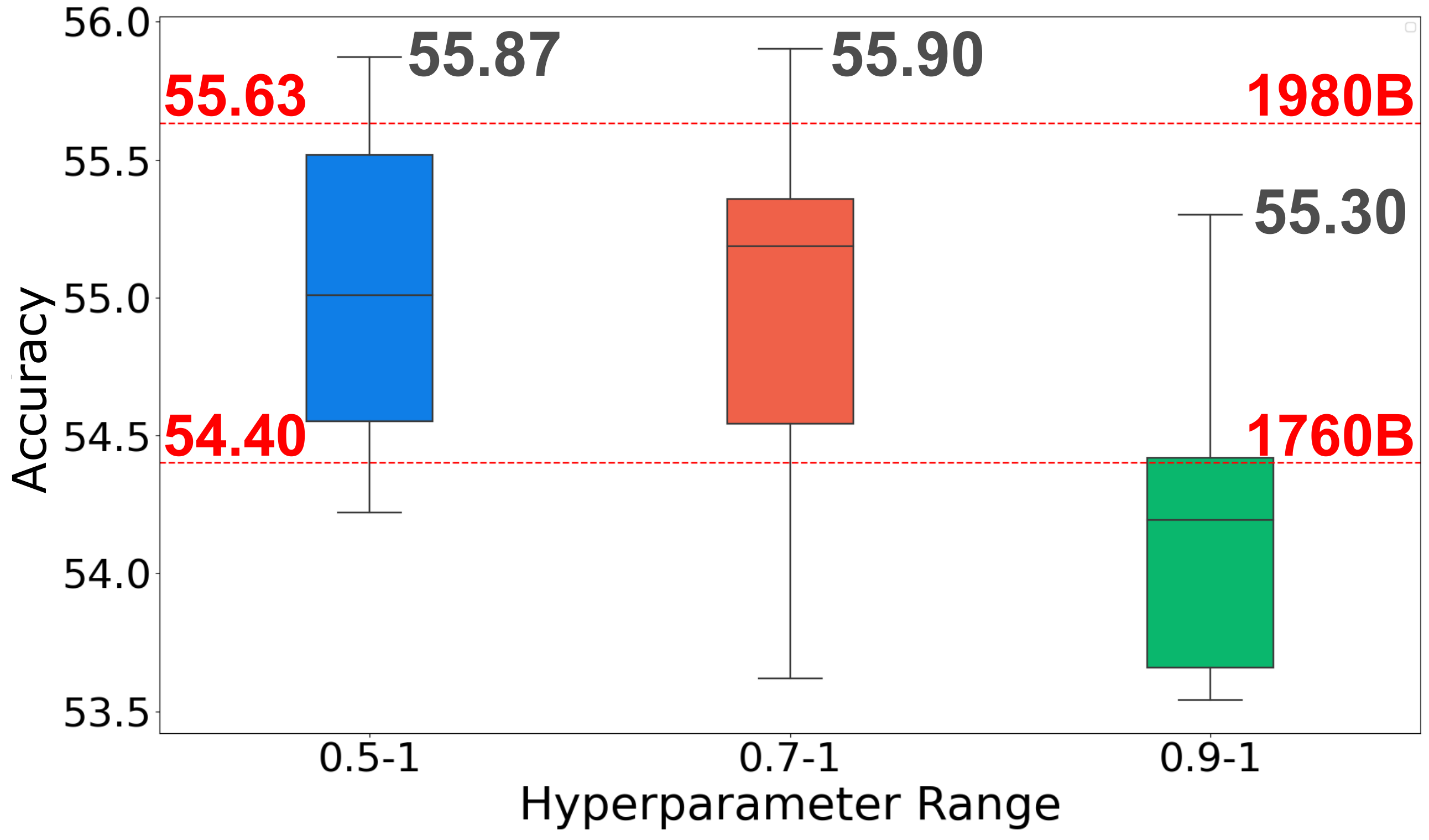}
    \caption{\textbf{Effect of merging weight search space boundaries on the performance of merged Baichuan2-1760B and Baichuan2-1980B models evaluated on the C-Eval dataset.} 
    The figure illustrates how varying the lower bound parameter $\alpha$ (set to 0.5, 0.7, and 0.9) influences accuracy outcomes in our Bayesian optimization framework. 
    Results show that moderate search space constraints ($\alpha = 0.5$ or $0.7$) yield optimal performance, while overly restrictive bounds ($\alpha = 0.9$) lead to performance degradation.}
    \label{fig:search_space_size}
    \vspace{-10pt}
\end{figure}

\begin{figure}[t]
    \centering
    \includegraphics[width=1\linewidth]{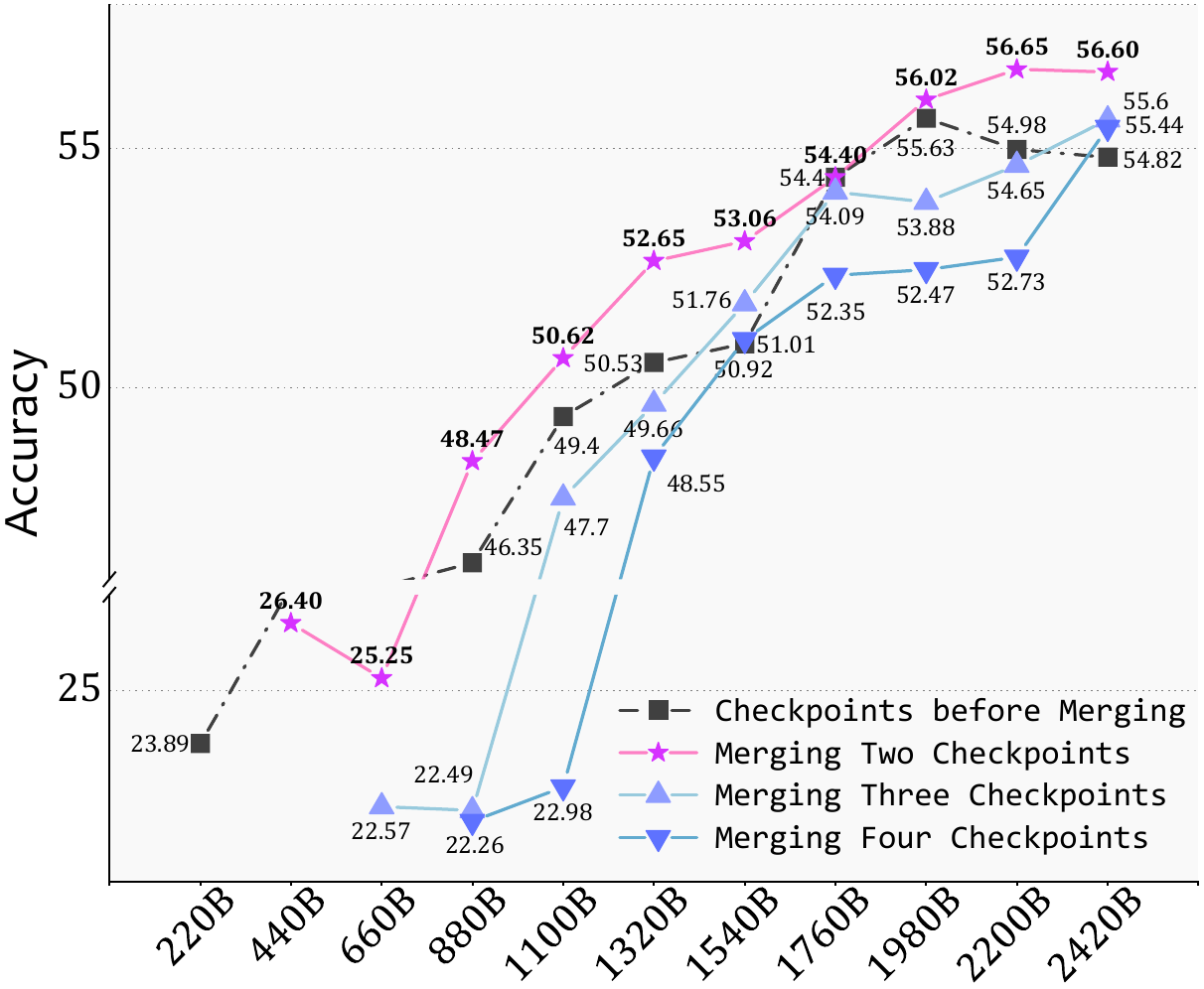}
    \caption{\textbf{Empirical comparison of merging strategies across different numbers of adjacent checkpoints using the Greedy Soup method on the C-Eval dataset.} 
    The analysis compares performance when merging two, three, and four consecutive Baichuan2 checkpoints across various training stages (200B to 2640B tokens). 
    Results demonstrate that pairwise merging consistently outperforms multi-checkpoint combinations, with diminishing returns as more checkpoints are included.}
    \label{fig:2st_ceval}
    \vspace{-10pt}
\end{figure}


\subsection{Empirical Analysis of Multi-Checkpoint Merging}
\label{app:num}

Building upon our pilot experiments, we investigated performance effects when combining varying numbers of checkpoints during pretraining. Using the C-Eval dataset, we employed the greedy soup strategy~\cite{wortsman2022model} to merge adjacent three or four checkpoints across different pretraining stages.

Results presented in \figref{fig:2st_ceval} reveal that \textbf{pairwise merging consistently outperforms multi-checkpoint combinations}. For instance, merging Baichuan2-1320B with Baichuan2-1540B achieves \textcolor{red}{\textbf{53.06\%}} (\textcolor{red}{\textbf{+2.14}}), while merging three checkpoints (Baichuan2-1100B, Baichuan2-1320B, and Baichuan2-1540B) yields \textcolor{orange}{\textbf{51.76\%}} (\textcolor{orange}{\textbf{+0.84}}), and four checkpoints result in further reduced performance of \textcolor{orange}{\textbf{51.01\%}} (\textcolor{orange}{\textbf{+0.09}}). This validates our pairwise merging strategy's computational efficiency and performance optimality.

\vspace{-10pt}
\section{Conclusion}
In this paper, to alleviate the huge computational cost of pretraining LLM, we propose merging checkpoints in the pretraining trajectory. Specifically, we first conduct some pilot experiments to explore the characters of checkpoint merging. Then, based on the findings in the pilot experiments, we propose a method rooted in  Bayesian optimization to find the optimal or near-optimal merging weight. Through various experiments, we find that: our proposed approach has the potential to enhance pretraining, offering nearly a free lunch Besides superior performance, the merged result still exhibits a strong generalization capability across various domains, which means our proposed method does not compromise the generalization of pretraining checkpoints.

\nonumsection{Acknowledgements}

This work is supported by the National Natural Science Foundation of China (Grant No. 62306087 and 62472121), the Natural Science Foundation of Shandong Province (Grant No. ZR2023QF154), Special Funding Program of Shandong Taishan Scholars Project and CCF-Sangfor 'Yuanwang' Research Fund (Grant No. 20240201).

\clearpage
\section*{Impact Statement}
This paper presents work whose goal is to advance the field of Machine Learning.
There are many potential societal consequences of our work, none which we feel must be specifically highlighted here.

\bibliography{resources/reference}
\bibliographystyle{configuration/icml2025}

\newpage
\appendix
\onecolumn

\section{Bounds for Linear Checkpoint Merging}
\label{appendix:bounds}

In this section, we establish more realistic and insightful theoretical bounds on the performance of merged model checkpoints. Considering two consecutive checkpoints $\Theta_{t-1}$ and $\Theta_t$ from the training of a large language model (LLM), we define the merged parameters $\widetilde{\Theta}_t$ as a convex combination of these checkpoints:
\begin{equation}
    \widetilde{\Theta}_t = \lambda_t \Theta_t + (1 - \lambda_t) \Theta_{t-1}, \quad \lambda_t \in [\alpha, 1],
    \label{eq:merged_parameters_refined}
\end{equation}
where $\alpha \in [0, 1)$ specifies the minimum weight assigned to $\Theta_t$.

Let $f(\Theta)$ denote the expected performance metric (e.g., accuracy) of the model with parameters $\Theta$ evaluated on a validation dataset $D$. Our goal is to derive theoretical bounds on $f(\widetilde{\Theta}_t)$ that align with realistic properties of neural network loss landscapes and provide deeper insights into the effects of linear checkpoint merging.

\subsection{Assumptions}

To enhance the theoretical framework, we adopt assumptions that reflect the practical characteristics of neural networks more closely.

\begin{assumption}[Smoothness of the Performance Function]
The performance function $f(\Theta)$ is differentiable, and its gradient $\nabla f(\Theta)$ is Lipschitz continuous with constant $L_g > 0$:
\begin{equation}
    \| \nabla f(\Theta_a) - \nabla f(\Theta_b) \| \leq L_g \| \Theta_a - \Theta_b \|, \quad \forall \Theta_a, \Theta_b.
    \label{eq:gradient_lipschitz}
\end{equation}
\end{assumption}

\begin{assumption}[Non-Convexity and Quadratic Approximation]
While $f(\Theta)$ is generally non-convex, in the neighborhood of $\Theta_{t-1}$ and $\Theta_t$, it can be approximated by a quadratic function. Specifically, for $\Theta = \Theta_t + \Delta$, where $\| \Delta \|$ is small, we have
\begin{equation}
    f(\Theta) \approx f(\Theta_t) + \nabla f(\Theta_t)^\top \Delta + \tfrac{1}{2} \Delta^\top H_t \Delta,
    \label{eq:quadratic_approximation}
\end{equation}
where $H_t$ is the Hessian matrix at $\Theta_t$.
\end{assumption}

\begin{assumption}[Bounded Hessian]
The eigenvalues of the Hessian matrices at $\Theta_{t-1}$ and $\Theta_t$ are bounded:
\begin{equation}
    \lambda_{\min} I \preceq H_t, H_{t-1} \preceq \lambda_{\max} I,
    \label{eq:hessian_bounds}
\end{equation}
where $\lambda_{\min} \geq 0$ and $\lambda_{\max} > 0$ are constants, and $I$ is the identity matrix.
\end{assumption}

\subsection{Proof Performance Bounds}

\begin{proof}

Under the assumptions, we derive tighter bounds on $f(\widetilde{\Theta}_t)$. The derivation is broken down into the following steps:

\textbf{Step 1: Quadratic Approximation from $\Theta_t$.}

Expand $f(\widetilde{\Theta}_t)$ around $\Theta_t$ using the quadratic approximation (\ref{eq:quadratic_approximation}) with $\Delta = \widetilde{\Theta}_t - \Theta_t = (1 - \lambda_t)(\Theta_{t-1} - \Theta_t)$:
\begin{align}
    f(\widetilde{\Theta}_t) &\approx f(\Theta_t) + \nabla f(\Theta_t)^\top \Delta + \tfrac{1}{2} \Delta^\top H_t \Delta \nonumber \\
    &= f(\Theta_t) + (1 - \lambda_t) \nabla f(\Theta_t)^\top (\Theta_{t-1} - \Theta_t) + \tfrac{1}{2} (1 - \lambda_t)^2 (\Theta_{t-1} - \Theta_t)^\top H_t (\Theta_{t-1} - \Theta_t).
    \label{eq:quadratic_expansion_theta_t}
\end{align}

\textbf{Step 2: Quadratic Approximation from $\Theta_{t-1}$.}

Similarly, expand $f(\widetilde{\Theta}_t)$ around $\Theta_{t-1}$ with $\Delta = \widetilde{\Theta}_t - \Theta_{t-1} = \lambda_t (\Theta_t - \Theta_{t-1})$:
\begin{align}
    f(\widetilde{\Theta}_t) &\approx f(\Theta_{t-1}) + \nabla f(\Theta_{t-1})^\top \Delta + \tfrac{1}{2} \Delta^\top H_{t-1} \Delta \nonumber \\
    &= f(\Theta_{t-1}) + \lambda_t \nabla f(\Theta_{t-1})^\top (\Theta_t - \Theta_{t-1}) + \tfrac{1}{2} \lambda_t^2 (\Theta_t - \Theta_{t-1})^\top H_{t-1} (\Theta_t - \Theta_{t-1}).
    \label{eq:quadratic_expansion_theta_t1}
\end{align}

\textbf{Step 3: Combining the Approximations.}

Take a convex combination of (\ref{eq:quadratic_expansion_theta_t}) and (\ref{eq:quadratic_expansion_theta_t1}), weighting each by $\lambda_t$ and $(1 - \lambda_t)$ respectively, to obtain an averaged approximation:
\begin{align}
    f(\widetilde{\Theta}_t) &\approx \lambda_t f(\Theta_t) + (1 - \lambda_t) f(\Theta_{t-1}) \nonumber \\
    &\quad + \lambda_t (1 - \lambda_t) \left[ \nabla f(\Theta_t) - \nabla f(\Theta_{t-1}) \right]^\top (\Theta_{t-1} - \Theta_t) \nonumber \\
    &\quad + \tfrac{1}{2} \left[ \lambda_t^2 (\Theta_t - \Theta_{t-1})^\top H_{t-1} (\Theta_t - \Theta_{t-1}) + (1 - \lambda_t)^2 (\Theta_{t-1} - \Theta_t)^\top H_t (\Theta_{t-1} - \Theta_t) \right].
    \label{eq:combined_quadratic_approximation}
\end{align}
Note that $\Theta_{t-1} - \Theta_t = - (\Theta_t - \Theta_{t-1})$.

\textbf{Step 4: Bounding the Gradient Difference.}

Using the smoothness assumption (\ref{eq:gradient_lipschitz}), bound the gradient difference:
\begin{equation}
    \| \nabla f(\Theta_t) - \nabla f(\Theta_{t-1}) \| \leq L_g \| \Theta_t - \Theta_{t-1} \|.
    \label{eq:gradient_difference_bound}
\end{equation}
Therefore, the term $\left[ \nabla f(\Theta_t) - \nabla f(\Theta_{t-1}) \right]^\top (\Theta_{t-1} - \Theta_t)$ can be bounded as:
\begin{align}
    \left| \left[ \nabla f(\Theta_t) - \nabla f(\Theta_{t-1}) \right]^\top (\Theta_{t-1} - \Theta_t) \right| &\leq \| \nabla f(\Theta_t) - \nabla f(\Theta_{t-1}) \| \cdot \| \Theta_{t-1} - \Theta_t \| \nonumber \\
    &= \| \nabla f(\Theta_t) - \nabla f(\Theta_{t-1}) \| \cdot \| \Theta_t - \Theta_{t-1} \| \nonumber \\
    &\leq L_g \| \Theta_t - \Theta_{t-1} \|^2.
    \label{eq:gradient_term_bound}
\end{align}

\textbf{Step 5: Bounding the Hessian Terms.}

Using (\ref{eq:hessian_bounds}), bound the quadratic terms:
\begin{align}
    (\Theta_t - \Theta_{t-1})^\top H_{t-1} (\Theta_t - \Theta_{t-1}) &\leq \lambda_{\max} \| \Theta_t - \Theta_{t-1} \|^2, \nonumber \\
    (\Theta_t - \Theta_{t-1})^\top H_t (\Theta_t - \Theta_{t-1}) &\leq \lambda_{\max} \| \Theta_t - \Theta_{t-1} \|^2.
    \label{eq:hessian_term_bounds}
\end{align}

\textbf{Step 6: Final Bound.}

Substitute the bounds from (\ref{eq:gradient_term_bound}) and (\ref{eq:hessian_term_bounds}) into (\ref{eq:combined_quadratic_approximation}) to obtain:
\begin{align}
    f(\widetilde{\Theta}_t) &\geq \lambda_t f(\Theta_t) + (1 - \lambda_t) f(\Theta_{t-1}) \nonumber \\
    &\quad - \lambda_t (1 - \lambda_t) L_g \| \Theta_t - \Theta_{t-1} \|^2 - \tfrac{1}{2} \left[ \lambda_t^2 + (1 - \lambda_t)^2 \right] \lambda_{\max} \| \Theta_t - \Theta_{t-1} \|^2.
    \label{eq:final_lower_bound_refined}
\end{align}
Similarly, using the upper bounds for the performance function, we approximate:
\begin{align}
    f(\widetilde{\Theta}_t) &\leq \lambda_t f(\Theta_t) + (1 - \lambda_t) f(\Theta_{t-1}) \nonumber \\
    &\quad + \lambda_t (1 - \lambda_t) L_g \| \Theta_t - \Theta_{t-1} \|^2 + \tfrac{1}{2} \left[ \lambda_t^2 + (1 - \lambda_t)^2 \right] \lambda_{\max} \| \Theta_t - \Theta_{t-1} \|^2.
    \label{eq:final_upper_bound_refined}
\end{align}

\textbf{Step 7: Combining Lower and Upper Bounds.}

Combining (\ref{eq:final_lower_bound_refined}) and (\ref{eq:final_upper_bound_refined}), the performance of the merged checkpoint satisfies:
\begin{align}
    f(\widetilde{\Theta}_t) \approx \lambda_t f(\Theta_t) + (1 - \lambda_t) f(\Theta_{t-1}) \pm \left( \lambda_t (1 - \lambda_t) L_g + \tfrac{1}{2} [ \lambda_t^2 + (1 - \lambda_t)^2 ] \lambda_{\max} \right) \| \Theta_t - \Theta_{t-1} \|^2.
    \label{eq:performance_bounds_refined}
\end{align}
\end{proof}

\section{Effect of Checkpoint Merging on Convergence}
\label{appendix:convergence}

In this section, we analyze how linear checkpoint merging influences the convergence behavior of gradient-based optimization algorithms. Our goal is to provide a theoretical framework that explains the potential benefits of merging on convergence rates and the attainment of optimal performance.

\subsection{Convergence in Standard Gradient-Based Training}

In standard gradient-based training, model parameters are iteratively updated to minimize a loss function $L(\Theta)$ over the parameter space $\Theta \in \mathbb{R}^n$. The updates are typically given by:
\begin{equation}
    \Theta_t = \Theta_{t-1} - \eta \nabla L(\Theta_{t-1}),
    \label{eq:gradient_update}
\end{equation}
where $\eta > 0$ is the learning rate, and $\nabla L(\Theta_{t-1})$ is the gradient of the loss function evaluated at $\Theta_{t-1}$. The convergence behavior depends on factors such as the properties of the loss function (e.g., convexity, smoothness), the optimization algorithm, and hyperparameters like the learning rate.

Let $\Theta^*$ denote a local or global minimizer of $L(\Theta)$, corresponding to the optimal performance $f^* = f(\Theta^*)$. The goal of training is to find $\Theta^*$ or parameters close to it.

\subsection{Impact of Checkpoint Merging on Convergence}

Checkpoint merging introduces a mechanism whereby parameters from different iterations are linearly combined. This process can impact convergence in several ways. By interpolating between consecutive checkpoints, merging can effectively perform larger steps in parameter space, potentially accelerating convergence under certain conditions. Merging can also smooth out fluctuations in the optimization trajectory caused by stochastic gradients, leading to more stable convergence. Additionally, merging may help the optimization trajectory enter flatter regions of the loss landscape, which are associated with better generalization performance. Linear combinations of parameters can aid in escaping saddle points or shallow local minima by moving the parameters to regions with lower loss.

\subsection{Convergence Analysis}

We analyze the impact of checkpoint merging on the convergence of the loss function and the performance metric $f(\Theta)$. Consider the following setting: let $\{ \Theta_t \}_{t=0}^T$ be the sequence of parameters obtained from standard gradient updates. At each step $t$, we merge checkpoints to obtain $\widetilde{\Theta}_t$ via
\begin{equation}
    \widetilde{\Theta}_t = \lambda_t \Theta_t + (1 - \lambda_t) \Theta_{t-1}, \quad \lambda_t \in [\alpha, 1],\; \alpha \in [0,1).
    \label{eq:checkpoint_merging}
\end{equation}
We define the performance improvement due to merging as
\begin{equation}
    \Delta f_t = f(\widetilde{\Theta}_t) - f(\Theta_t).
    \label{eq:performance_improvement}
\end{equation}
Our goal is to analyze $\Delta f_t$ and understand how it influences convergence towards the optimal performance $f^*$.

\subsubsection{Assumptions}

To facilitate the analysis, we make the following assumptions:

\begin{assumption}[Smoothness of the Loss Function]
The loss function $L(\Theta)$ is twice continuously differentiable, and its gradient $\nabla L(\Theta)$ is Lipschitz continuous with constant $L > 0$:
\begin{equation}
    \| \nabla L(\Theta_a) - \nabla L(\Theta_b) \| \leq L \| \Theta_a - \Theta_b \|, \quad \forall\, \Theta_a, \Theta_b.
    \label{eq:loss_gradient_lipschitz}
\end{equation}
\end{assumption}

\begin{assumption}[Polyak-Łojasiewicz (PL) Condition]
The loss function satisfies the PL condition, i.e., there exists a constant $\mu > 0$ such that
\begin{equation}
    \tfrac{1}{2} \| \nabla L(\Theta) \|^2 \geq \mu \left( L(\Theta) - L^* \right), \quad \forall\, \Theta,
    \label{eq:pl_condition}
\end{equation}
where $L^* = L(\Theta^*)$ is the minimal loss value.
\end{assumption}

\begin{assumption}[Bounded Gradient Norms]
The gradients have bounded norms:
\begin{equation}
    \| \nabla L(\Theta_t) \| \leq G_{\max}, \quad \forall\, t.
    \label{eq:bounded_gradients}
\end{equation}
\end{assumption}

\subsubsection{Analysis of $\Delta f_t$}

We analyze the performance improvement $\Delta f_t$ due to merging. Since $f(\Theta)$ is related to $L(\Theta)$ (e.g., higher performance corresponds to lower loss), we can express $f(\Theta)$ as a function of $L(\Theta)$. For the purpose of analysis, we assume that $f(\Theta)$ decreases monotonically with $L(\Theta)$ so that a decrease in loss corresponds to an improvement in performance.

\begin{proof}
\textbf{Step 1: Taylor Expansion of the Loss Function.}

Consider the Taylor expansion of $L(\Theta)$ at $\Theta_t$ for $\Delta \Theta = \widetilde{\Theta}_t - \Theta_t$:
\begin{align}
    L(\widetilde{\Theta}_t) &= L(\Theta_t) + \nabla L(\Theta_t)^\top (\widetilde{\Theta}_t - \Theta_t) + \tfrac{1}{2} (\widetilde{\Theta}_t - \Theta_t)^\top \nabla^2 L(\Theta_t) (\widetilde{\Theta}_t - \Theta_t) + R,
    \label{eq:loss_taylor_expansion}
\end{align}
where $R$ represents the higher-order remainder terms.

Since $\widetilde{\Theta}_t$ is a convex combination of $\Theta_t$ and $\Theta_{t-1}$, we have
\begin{equation}
    \widetilde{\Theta}_t - \Theta_t = (1 - \lambda_t)(\Theta_{t-1} - \Theta_t).
    \label{eq:delta_theta}
\end{equation}
Substituting into the Taylor expansion, we get
\begin{align}
    L(\widetilde{\Theta}_t) &= L(\Theta_t) + (1 - \lambda_t) \nabla L(\Theta_t)^\top (\Theta_{t-1} - \Theta_t) \nonumber \\
    &\quad + \tfrac{1}{2} (1 - \lambda_t)^2 (\Theta_{t-1} - \Theta_t)^\top \nabla^2 L(\Theta_t) (\Theta_{t-1} - \Theta_t) + R.
    \label{eq:loss_taylor_expansion_substituted}
\end{align}

\textbf{Step 2: Bounding the Remainder Term.}

Under the assumption of Lipschitz continuity of the Hessian (i.e., the third-order derivatives are bounded), the remainder term $R$ can be bounded as:
\begin{equation}
    |R| \leq \frac{M}{6} \| \widetilde{\Theta}_t - \Theta_t \|^3,
    \label{eq:remainder_bound}
\end{equation}
where $M$ is the Lipschitz constant for the Hessian. For sufficiently small $\| \Theta_t - \Theta_{t-1} \|$, the remainder term becomes negligible.

\textbf{Step 3: Expected Improvement.}

Taking expectations over the stochasticity in $\Theta_{t-1}$ and $\Theta_t$ (due to stochastic gradients), and neglecting the remainder term, we have
\begin{align}
    \mathbb{E}[ L(\widetilde{\Theta}_t) ] &\approx \mathbb{E}[ L(\Theta_t) ] + (1 - \lambda_t) \mathbb{E}\left[ \nabla L(\Theta_t)^\top (\Theta_{t-1} - \Theta_t) \right] \nonumber \\
    &\quad + \tfrac{1}{2} (1 - \lambda_t)^2 \mathbb{E}\left[ (\Theta_{t-1} - \Theta_t)^\top \nabla^2 L(\Theta_t) (\Theta_{t-1} - \Theta_t) \right].
    \label{eq:expected_loss_improvement}
\end{align}
The term $\mathbb{E}\left[ \nabla L(\Theta_t)^\top (\Theta_{t-1} - \Theta_t) \right]$ can be related to the expected progress in gradient descent.

\textbf{Step 4: Relation to Gradient Descent Steps.}

From the gradient update \eqref{eq:gradient_update}, we have
\begin{equation}
    \Theta_{t-1} - \Theta_t = -\eta \nabla L(\Theta_{t-1}).
    \label{eq:theta_difference}
\end{equation}
Assuming $\nabla L(\Theta_{t-1}) \approx \nabla L(\Theta_t)$ for small learning rates or smooth loss landscapes, we can write
\begin{equation}
    \Theta_{t-1} - \Theta_t \approx -\eta \nabla L(\Theta_t).
    \label{eq:theta_difference_approx}
\end{equation}
Substituting back into \eqref{eq:expected_loss_improvement}, we obtain
\begin{align}
    \mathbb{E}[ L(\widetilde{\Theta}_t) ] &\approx \mathbb{E}[ L(\Theta_t) ] - (1 - \lambda_t) \eta \mathbb{E}[ \| \nabla L(\Theta_t) \|^2 ] + \tfrac{1}{2} (1 - \lambda_t)^2 \eta^2 \mathbb{E}\left[ \nabla L(\Theta_t)^\top \nabla^2 L(\Theta_t) \nabla L(\Theta_t) \right].
    \label{eq:expected_loss_improvement_substituted}
\end{align}

\textbf{Step 5: Simplifying the Second-Order Term.}

Assuming that $\nabla^2 L(\Theta_t)$ is positive semi-definite (which holds for convex functions and in regions near local minima), we have
\begin{equation}
    \nabla L(\Theta_t)^\top \nabla^2 L(\Theta_t) \nabla L(\Theta_t) \geq \lambda_{\min} \| \nabla L(\Theta_t) \|^2,
    \label{eq:second_order_term}
\end{equation}
where $\lambda_{\min} \geq 0$ is the smallest eigenvalue of $\nabla^2 L(\Theta_t)$. Substituting back, we get
\begin{align}
    \mathbb{E}[ L(\widetilde{\Theta}_t) ] &\leq \mathbb{E}[ L(\Theta_t) ] - \eta (1 - \lambda_t) \mathbb{E}[ \| \nabla L(\Theta_t) \|^2 ] + \tfrac{1}{2} \eta^2 (1 - \lambda_t)^2 \lambda_{\max} \mathbb{E}[ \| \nabla L(\Theta_t) \|^2 ],
    \label{eq:expected_loss_improvement_bound}
\end{align}
where $\lambda_{\max} \geq 0$ is the largest eigenvalue of $\nabla^2 L(\Theta_t)$.

\textbf{Step 6: Net Expected Improvement.}

Combining terms, the net expected reduction in the loss due to merging is
\begin{align}
    \mathbb{E}[ L(\Theta_t) - L(\widetilde{\Theta}_t) ] &\geq \eta (1 - \lambda_t) \left( 1 - \tfrac{1}{2} \eta (1 - \lambda_t) \lambda_{\max} \right) \mathbb{E}[ \| \nabla L(\Theta_t) \|^2 ].
    \label{eq:net_loss_reduction}
\end{align}
For sufficiently small learning rates and $(1 - \lambda_t)$ values, the term in parentheses is positive, ensuring a net reduction in the expected loss.

\textbf{Step 7: Improvement in Performance.}

Assuming that the performance metric $f(\Theta)$ improves as the loss decreases, we have
\begin{equation}
    \mathbb{E}[ f(\widetilde{\Theta}_t) ] \geq \mathbb{E}[ f(\Theta_t) ] + \delta_t,
    \label{eq:performance_improvement_expectation}
\end{equation}
where $\delta_t > 0$ corresponds to the expected improvement in performance due to the reduction in loss.

\subsubsection{Convergence Towards Optimal Performance}

Under the PL condition \eqref{eq:pl_condition}, we can relate the squared gradient norm to the suboptimality in loss:
\begin{equation}
    \| \nabla L(\Theta_t) \|^2 \geq 2 \mu \left( L(\Theta_t) - L^* \right).
    \label{eq:gradient_norm_pl}
\end{equation}
Substituting into \eqref{eq:net_loss_reduction}, we obtain
\begin{align}
    \mathbb{E}[ L(\Theta_t) - L(\widetilde{\Theta}_t) ] &\geq 2 \eta \mu (1 - \lambda_t) \left( 1 - \tfrac{1}{2} \eta (1 - \lambda_t) \lambda_{\max} \right) \mathbb{E}\left[ L(\Theta_t) - L^* \right].
    \label{eq:loss_reduction_pl}
\end{align}
This shows that the expected reduction in loss is proportional to the current suboptimality $\mathbb{E}\left[ L(\Theta_t) - L^* \right]$, indicating that the merging process influences the convergence rate.

\textbf{Step 8: Convergence Rate.}

Defining
\begin{equation}
    \rho_t = 1 - 2 \eta \mu (1 - \lambda_t) \left( 1 - \tfrac{1}{2} \eta (1 - \lambda_t) \lambda_{\max} \right),
    \label{eq:rho_definition}
\end{equation}
we have
\begin{equation}
    \mathbb{E}[ L(\widetilde{\Theta}_t) - L^* ] \leq \rho_t\, \mathbb{E}[ L(\Theta_t) - L^* ].
    \label{eq:convergence_rate}
\end{equation}
For convergence, we require $\rho_t < 1$. Since $\eta$, $\mu$, $\lambda_{\max}$, and $(1 - \lambda_t)$ are positive, this condition can be satisfied with an appropriate choice of $\eta$ and $\lambda_t$.

\textbf{Step 9: Accumulated Performance Improvement.}

Over multiple merging steps, the accumulated suboptimality after $T$ steps is
\begin{equation}
    \mathbb{E}[ L(\widetilde{\Theta}_T) - L^* ] \leq \left( \prod_{t=1}^T \rho_t \right) \left( L(\Theta_0) - L^* \right).
    \label{eq:accumulated_convergence}
\end{equation}
Assuming $\rho_t = \rho$ (constant), we obtain exponential convergence:
\begin{equation}
    \mathbb{E}[ L(\widetilde{\Theta}_T) - L^* ] \leq \rho^T \left( L(\Theta_0) - L^* \right).
    \label{eq:exponential_convergence}
\end{equation}
Similarly, the performance metric converges towards the optimal performance $f^*$:
\begin{equation}
    \mathbb{E}\left[ f^* - f(\widetilde{\Theta}_T) \right] \leq \gamma^T \left( f^* - f(\Theta_0) \right),
    \label{eq:performance_convergence}
\end{equation}
where $\gamma < 1$ depends on $\rho$ and the relation between $f(\Theta)$ and $L(\Theta)$.
\end{proof}

\section{Acceleration through Gaussian Process Optimization}
\label{appendix:gp_convergence}

In this section, we analyze how Gaussian Process (GP) optimization accelerates convergence to the optimal merging weight in checkpoint merging and potentially surpasses the reachability upper bound of standard training.

Gaussian Processes provide a flexible, non-parametric Bayesian approach to modeling unknown functions. In the context of checkpoint merging, we model the performance function $f(\lambda_t)$ as a GP:
\begin{equation}
    f(\lambda_t) \sim \mathcal{GP}\left( \mu_0(\lambda_t), k(\lambda_t, \lambda_t') \right),
    \label{eq:gp_model}
\end{equation}
where $\mu_0(\lambda_t)$ is the mean function (often assumed to be zero), and $k(\lambda_t, \lambda_t')$ is the covariance (kernel) function encoding our assumptions about the smoothness and structure of $f(\lambda_t)$.

\subsection{Efficient Convergence to the Optimal Merging Weight}

Bayesian Optimization leverages Gaussian Processes to sequentially select $\lambda_t$ values expected to improve the performance $f(\lambda_t)$, balancing exploration and exploitation. The selection is guided by an acquisition function $\alpha(\lambda_t)$, such as Expected Improvement (EI), Upper Confidence Bound (UCB), or Probability of Improvement (PI). Under certain conditions, Bayesian Optimization with GPs can achieve convergence to the global optimum of $f(\lambda_t)$.

By incorporating prior knowledge and uncertainty, Gaussian Processes enable Bayesian Optimization to identify promising regions of $\lambda_t$ with fewer evaluations compared to grid or random search. The cumulative regret $R_T$ after $T$ iterations, defined as the sum of the differences between the optimal performance $f_{\text{max}}$ and the performance at each iteration, is sublinear in $T$ under mild assumptions. Specifically, for GP optimization with a bounded kernel function, we have
\begin{equation}
    R_T = O\left( \sqrt{T (\gamma_T + 1)} \right),
    \label{eq:regret_bound}
\end{equation}
where $\gamma_T$ is the maximum information gain from $T$ observations \cite{srinivas2009gaussian,chowdhury2017kernelized}. The sublinear regret implies that the average regret $R_T / T$ decreases to zero as $T \rightarrow \infty$, ensuring convergence to the global optimum.

\subsection{Breaking Through the Reachability Upper Bound}

Standard training may converge to suboptimal performance due to factors like local minima or limited exploration of the parameter space. GP-based Bayesian Optimization in checkpoint merging can potentially surpass this limit. Gaussian Processes capture the function's behavior, including any non-convexities or multimodalities, enabling the discovery of $\lambda_t$ values that yield better performance than those found via convex combination alone. Checkpoints $\Theta_{t-1}$ and $\Theta_t$ may have learned complementary features; an optimal $\lambda_t$ can exploit these synergies to improve generalization and performance beyond the convex combination's average. The acquisition function directs the search toward regions where the GP predicts higher potential gains, even if these regions are not suggested by local convexity assumptions.

\subsection{Proof of Convergence}

We formalize the convergence properties of GP-based checkpoint merging.

\begin{theorem}[Convergence of GP-based Checkpoint Merging]
Under the assumptions of Lipschitz continuity and local convexity of $f(\lambda_t)$, and assuming bounded observation noise, the GP-based Bayesian optimization approach for checkpoint merging converges almost surely to a merging weight $\lambda_t^*$ that maximizes $f(\widetilde{\Theta}_t)$, such that
\begin{equation}
    \lim_{T \to \infty} f(\widetilde{\Theta}_t^{(T)}) = f(\widetilde{\Theta}_t^{*}) = \max_{\lambda_t \in [\alpha, 1]} f(\lambda_t).
\end{equation}
\end{theorem}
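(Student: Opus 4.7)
The plan is to reduce the claim to a standard no-regret result for Gaussian-process bandits and then upgrade the no-regret statement to almost-sure convergence of the best-observed iterate. The search domain $[\alpha,1]$ is a compact one-dimensional interval, and by the Lipschitz assumption on $f(\lambda_t)$ the function is continuous on this compact set, so $f_{\max}=\max_{\lambda_t\in[\alpha,1]} f(\lambda_t)$ is attained at some $\lambda_t^{*}$. This allows the proof to be phrased entirely in terms of the simple regret $r_T = f_{\max}-\max_{k\le T} f(\lambda_t^{(k)})$, and our goal reduces to showing $r_T\to 0$ almost surely.

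First, I would set up the standard GP posterior concentration inequality: under the Lipschitz/smoothness assumption and bounded observation noise, there exists a sequence $\beta_T$ (poly-logarithmic in $T$) such that with high probability $|f(\lambda_t)-\mu_{T-1}(\lambda_t)|\le \beta_T^{1/2}\sigma_{T-1}(\lambda_t)$ uniformly in $\lambda_t$ (invoking the RKHS-type argument of Srinivas et al.\ or the kernelized-bandit refinement of Chowdhury--Gopalan). Next I would apply the cumulative-regret bound for each of the three component acquisition functions (EI, PI, UCB); for UCB this is the textbook argument giving $R_T=O(\sqrt{T\beta_T\gamma_T})$ with $\gamma_T$ the maximum information gain, and for EI/PI analogous bounds are available. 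The GP-Hedge wrapper inherits sublinear regret because Hedge's no-regret guarantee against the best expert adds only an $O(\sqrt{T\log M})$ additive term, which is absorbed into the same $O(\sqrt{T\gamma_T})$ asymptotics. For standard kernels on a one-dimensional compact domain, $\gamma_T$ is polylogarithmic, so $R_T/T\to 0$.

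Then I would convert sublinear cumulative regret into vanishing simple regret via the elementary inequality $r_T \le R_T/T$, which forces $r_T\to 0$ in probability. To upgrade to almost-sure convergence, I would apply a Borel--Cantelli argument: choosing the confidence parameter $\delta_T = T^{-2}$ in the high-probability regret bound makes the failure probabilities summable, so the bound holds for all but finitely many $T$ almost surely. Finally, because the best-observed iterate $\widetilde{\Theta}_t^{(T)} = \lambda_t^{(k_T^{*})}\Theta_t + (1-\lambda_t^{(k_T^{*})})\Theta_{t-1}$ depends continuously on its weight and $f$ is continuous, $f(\widetilde{\Theta}_t^{(T)})\to f(\widetilde{\Theta}_t^{*})$ almost surely, yielding the stated limit.

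The main obstacle I anticipate is the treatment of the GP-Hedge composite acquisition function: existing regret proofs are written for a single acquisition, and while the Hedge meta-algorithm has its own no-regret guarantee against the best base expert, combining the two regret analyses requires care because the GP posterior is shared across the experts and their selected queries are correlated. A secondary technical point is justifying the RKHS membership assumption (or Bayesian well-specification) implicit in invoking the Srinivas-style information-gain bound; in our setting the Lipschitz continuity and local convexity assumptions of the theorem must be shown to imply a suitable RKHS norm bound for the chosen kernel, or else the proof must be restated in the Bayesian (sample-path) regime where $f$ is drawn from the GP prior.
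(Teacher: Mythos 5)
Your proposal takes essentially the same route as the paper's proof: existence of $\lambda_t^*$ by continuity on the compact interval $[\alpha,1]$, the Srinivas/Chowdhury--Gopalan cumulative-regret bound $R_T = O\left(\sqrt{T\,\gamma_T}\right)$ for GP-based optimization, and sublinear regret forcing convergence to the optimal merging weight. If anything, you are more careful than the paper, which analyzes only UCB (ignoring the GP-Hedge combination the algorithm actually uses), jumps from sublinear regret directly to almost-sure last-iterate convergence without your simple-regret conversion $r_T \le R_T/T$, Borel--Cantelli upgrade, or best-observed-iterate formulation, and never addresses the RKHS well-specification issue you flag.
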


\begin{proof}
\textbf{Step 1: Establishing Continuity and Existence of the Optimum.}

Under the Lipschitz continuity assumption, $f(\lambda_t)$ satisfies
\begin{equation}
    |f(\lambda_t) - f(\lambda_t')| \leq L |\lambda_t - \lambda_t'|, \quad \forall\, \lambda_t, \lambda_t' \in [\alpha, 1],
    \label{eq:lipschitz_f_lambda}
\end{equation}
with Lipschitz constant $L > 0$. This ensures that $f(\lambda_t)$ is continuous over the compact interval $[\alpha, 1]$. By the Extreme Value Theorem, there exists $\lambda_t^* \in [\alpha, 1]$ such that
\begin{equation}
    f(\lambda_t^*) = \max_{\lambda_t \in [\alpha, 1]} f(\lambda_t).
\end{equation}

\textbf{Step 2: Posterior Convergence of Gaussian Processes.}

Bayesian Optimization with GP priors updates the posterior distribution of $f(\lambda_t)$ after each observation. The GP posterior mean $\mu_T(\lambda_t)$ and variance $\sigma_T^2(\lambda_t)$ at iteration $T$ incorporate all previous evaluations. As $T \to \infty$, the GP posterior converges to the true function $f(\lambda_t)$ at the observed points. Given that the observation noise is bounded or zero, the uncertainty (posterior variance) at these points decreases to zero.

\textbf{Step 3: Acquisition Function and Selection of Next Point.}

The acquisition function $\alpha_T(\lambda_t)$ selects the next point $\lambda_t^{(T+1)}$ by balancing exploration and exploitation:
\begin{equation}
    \lambda_t^{(T+1)} = \arg \max_{\lambda_t \in [\alpha, 1]} \alpha_T(\lambda_t),
\end{equation}
where $\alpha_T(\lambda_t)$ depends on $\mu_T(\lambda_t)$ and $\sigma_T(\lambda_t)$. For example, the Upper Confidence Bound (UCB) acquisition function is
\begin{equation}
    \alpha_T^{\text{UCB}}(\lambda_t) = \mu_T(\lambda_t) + \beta_T \sigma_T(\lambda_t),
\end{equation}
with $\beta_T > 0$ controlling the trade-off between exploration and exploitation.

\textbf{Step 4: Regret Analysis.}

Under suitable choices of $\beta_T$, the cumulative regret $R_T$ of the GP-UCB algorithm satisfies
\begin{equation}
    R_T = \sum_{t=1}^T \left( f(\lambda_t^*) - f(\lambda_t^{(t)}) \right) = O\left( \sqrt{T (\gamma_T + 1)} \right),
    \label{eq:cumulative_regret}
\end{equation}
where $\gamma_T$ is the maximum information gain from $T$ observations, typically logarithmic in $T$ \cite{srinivas2009gaussian, chowdhury2017kernelized}. A sublinear cumulative regret implies that
\begin{equation}
    \lim_{T \to \infty} \frac{1}{T} R_T = 0,
\end{equation}
meaning the average regret per iteration decreases to zero, and the sequence $\{ \lambda_t^{(T)} \}$ approaches $\lambda_t^*$.

\textbf{Step 5: Almost Sure Convergence to the Optimal Weight.}

By the convergence properties of GP-UCB algorithms, we have
\begin{equation}
    \lim_{T \to \infty} \lambda_t^{(T)} = \lambda_t^*, \quad \text{almost surely}.
\end{equation}
Therefore, the performance of the merged model converges to
\begin{equation}
    \lim_{T \to \infty} f(\widetilde{\Theta}_t^{(T)}) = f(\lambda_t^*),
\end{equation}
which is the maximum achievable performance via merging in the interval $[\alpha, 1]$. If $f_{\text{ideal}}$ is achievable within the merging interval—that is, if there exists $\lambda_t^*$ such that $f(\lambda_t^*) = f_{\text{ideal}}$—then
\begin{equation}
    \lim_{T \to \infty} f(\widetilde{\Theta}_t^{(T)}) = f(\text{ideal}).
\end{equation}
\end{proof}

\section{Impact on Generalization and Derivation of Generalization Bounds}
\label{appendix:generalization}

In this section, we analyze how linear checkpoint merging affects the generalization performance of neural networks. We derive theoretical bounds that quantify this impact, leveraging the PAC-Bayesian framework. This analysis provides a deeper understanding of the benefits of checkpoint merging on generalization and offers theoretical insights into its practical advantages.

\subsection{Generalization Improvement through Flat Minima}

Flat minima in the loss landscape are associated with better generalization performance because small perturbations in the model parameters lead to minimal changes in the loss function~\citep{hochreiter1997flat}. Checkpoint merging facilitates convergence to flatter regions by averaging parameters from different checkpoints, effectively smoothing the loss landscape and enhancing the model's robustness to unseen data.

By merging checkpoints $\Theta_{t-1}$ and $\Theta_t$ to obtain $\widetilde{\Theta}_t$, we perform parameter averaging, which can be interpreted as moving towards flatter minima. This process reduces the model's sensitivity to parameter perturbations and contributes to improved generalization.

\subsection{PAC-Bayesian Generalization Bound}

We employ the PAC-Bayesian framework to derive a generalization bound for the merged model $\widetilde{\Theta}_t$. The PAC-Bayesian theorem provides probabilistic bounds on the generalization error of stochastic classifiers based on the Kullback-Leibler (KL) divergence between a posterior distribution $Q$ over hypotheses (models) and a prior distribution $P$.

\subsubsection{Definitions}

Let:

\begin{itemize}
    \item $\mathcal{H}$ be the hypothesis space (set of all possible model parameters $\Theta$).
    \item $P$ be a prior distribution over $\mathcal{H}$.
    \item $Q$ be a posterior distribution over $\mathcal{H}$ after observing the data.
    \item $\ell(\Theta, z)$ be the loss incurred by hypothesis $\Theta$ on data point $z$.
    \item $\mathcal{D}$ be the data distribution.
    \item $S = \{ z_i \}_{i=1}^n$ be the training set consisting of $n$ i.i.d.\ samples from $\mathcal{D}$.
    \item $\mathcal{L}_{\mathcal{D}}(Q) = \mathbb{E}_{\Theta \sim Q} \mathbb{E}_{z \sim \mathcal{D}} [ \ell(\Theta, z) ]$ be the expected loss (true risk) of $Q$.
    \item $\mathcal{L}_{S}(Q) = \mathbb{E}_{\Theta \sim Q} \frac{1}{n} \sum_{i=1}^n \ell(\Theta, z_i)$ be the empirical loss (empirical risk) of $Q$.
\end{itemize}

\subsubsection{PAC-Bayesian Generalization Bound}

\begin{theorem}[PAC-Bayesian Generalization Bound]
\label{thm:pac_bayes_bound}
For any prior distribution $P$ over $\mathcal{H}$, any $\delta \in (0,1)$, and any distribution $\mathcal{D}$ over the data, with probability at least $1 - \delta$ over the choice of the training set $S$, the following holds for all posterior distributions $Q$ over $\mathcal{H}$:
\begin{equation}
    \mathcal{L}_{\mathcal{D}}(Q) \leq \mathcal{L}_{S}(Q) + \sqrt{ \frac{ D_{\mathrm{KL}}(Q \,\|\, P) + \ln \left( \frac{2 \sqrt{n}}{\delta} \right) }{ 2 n } }.
    \label{eq:pac_bayes_bound}
\end{equation}
\end{theorem}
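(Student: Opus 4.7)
The plan is to follow the classical Maurer--McAllester route, combining the Donsker--Varadhan change-of-measure inequality with a concentration bound on a well-chosen moment generating function, and then converting from a bound on the KL divergence between Bernoullis to the square-root form in~\eqref{eq:pac_bayes_bound}. Throughout I will assume, as is standard for this form of the bound, that the loss $\ell(\Theta, z)$ is bounded in $[0,1]$; this ensures the relevant moment generating functions are controlled.

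First, I would invoke the Donsker--Varadhan variational identity: for any measurable $\phi : \mathcal{H} \to \mathbb{R}$ and any posterior $Q \ll P$,
\begin{equation*}
    \mathbb{E}_{\Theta \sim Q}[\phi(\Theta)] \leq D_{\mathrm{KL}}(Q \,\|\, P) + \ln \mathbb{E}_{\Theta \sim P}[e^{\phi(\Theta)}].
\end{equation*}
The key choice is $\phi(\Theta) = 2n \bigl(\mathcal{L}_{\mathcal{D}}(\Theta) - \mathcal{L}_S(\Theta)\bigr)^2$, so that after noting convexity of $x \mapsto x^2$ and Jensen's inequality to pull the expectation inside, the left-hand side dominates $2n\bigl(\mathcal{L}_{\mathcal{D}}(Q) - \mathcal{L}_S(Q)\bigr)^2$. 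Rearranging yields
\begin{equation*}
    2n\bigl(\mathcal{L}_{\mathcal{D}}(Q) - \mathcal{L}_S(Q)\bigr)^2 \leq D_{\mathrm{KL}}(Q \,\|\, P) + \ln \mathbb{E}_{\Theta \sim P}\bigl[e^{2n(\mathcal{L}_{\mathcal{D}}(\Theta) - \mathcal{L}_S(\Theta))^2}\bigr].
\end{equation*}

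Second, I would bound the moment generating factor $\mathbb{E}_S \mathbb{E}_{\Theta \sim P}\bigl[e^{2n(\mathcal{L}_{\mathcal{D}}(\Theta) - \mathcal{L}_S(\Theta))^2}\bigr]$ using Fubini to swap the expectations (the integrand is nonnegative) and then applying the Maurer lemma: for a single fixed $\Theta$ with loss in $[0,1]$, one has $\mathbb{E}_S\bigl[e^{2n(\mathcal{L}_{\mathcal{D}}(\Theta) - \mathcal{L}_S(\Theta))^2}\bigr] \leq 2\sqrt{n}$. This is the technical heart of the argument and the step I expect to be the main obstacle, since it requires either Maurer's refined Hoeffding-type computation or an appeal to the binomial concentration via Pinsker's inequality; I would either cite it as a lemma or reproduce the short proof via the subgaussian tail of $\mathcal{L}_S(\Theta) - \mathcal{L}_{\mathcal{D}}(\Theta)$. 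Taking the expectation over $P$ preserves the bound, giving $\mathbb{E}_S \mathbb{E}_{\Theta \sim P}[\,\cdot\,] \leq 2\sqrt{n}$.

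Third, I would apply Markov's inequality to the nonnegative random variable $X_S := \mathbb{E}_{\Theta \sim P}\bigl[e^{2n(\mathcal{L}_{\mathcal{D}}(\Theta) - \mathcal{L}_S(\Theta))^2}\bigr]$ to obtain $\Pr_S\bigl[X_S \geq \tfrac{2\sqrt{n}}{\delta}\bigr] \leq \delta$. On the complementary high-probability event, $\ln X_S \leq \ln(2\sqrt{n}/\delta)$. Substituting back yields, with probability at least $1 - \delta$ over the draw of $S$, and simultaneously for every posterior $Q$,
\begin{equation*}
    \bigl(\mathcal{L}_{\mathcal{D}}(Q) - \mathcal{L}_S(Q)\bigr)^2 \leq \frac{ D_{\mathrm{KL}}(Q \,\|\, P) + \ln(2\sqrt{n}/\delta)}{2n}.
\end{equation*}
Taking square roots and moving $\mathcal{L}_S(Q)$ to the other side produces exactly the claimed inequality~\eqref{eq:pac_bayes_bound}. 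The uniformity over $Q$ is automatic because the right-hand side of the Donsker--Varadhan step holds for \emph{all} $Q$ once the single event on $S$ is fixed, so no union bound over posteriors is needed.
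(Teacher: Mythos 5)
Your proposal is correct, but it follows a genuinely different route from the paper's. The paper applies Donsker--Varadhan to the \emph{linear} functional $\phi(\Theta) = s\,(\mathcal{L}_{\mathcal{D}}(\Theta) - \mathcal{L}_{S}(\Theta))$, bounds the prior moment-generating function by Hoeffding's $\exp(s^2/2n)$, optimizes the resulting bound at $s^* = \sqrt{2n\,D_{\mathrm{KL}}(Q\,\|\,P)}$ to get a $\sqrt{2 D_{\mathrm{KL}}/n}$ rate, then converts to a high-probability statement via Markov and attributes the extra $\ln(2\sqrt{n})$ to a ``discretization/covering number'' adjustment. You instead use the \emph{quadratic} functional $\phi(\Theta) = 2n(\mathcal{L}_{\mathcal{D}}(\Theta) - \mathcal{L}_{S}(\Theta))^2$ together with Maurer's lemma $\mathbb{E}_S\bigl[e^{2n(\mathcal{L}_{\mathcal{D}}(\Theta)-\mathcal{L}_S(\Theta))^2}\bigr] \le 2\sqrt{n}$, Fubini, Markov, and Jensen. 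Your route is arguably the cleaner one: it produces the $\ln(2\sqrt{n}/\delta)$ term and the square-root form directly, and uniformity over $Q$ is automatic, whereas the paper's optimization over $s$ is only legitimate if $s$ is fixed before the high-probability event (it depends on $Q$ and implicitly on $S$), so strictly it needs a union bound over a grid of $s$ values, and its final $2\sqrt{n}$ correction is asserted rather than derived. Two caveats on your side: the boundedness $\ell \in [0,1]$ you assume is genuinely needed (the paper also invokes it), and your parenthetical fallback of proving the key lemma ``via the subgaussian tail'' would not quite work, since for the mean of $n$ bounded i.i.d.\ terms the subgaussian parameter is $1/(4n)$ and the exponent $2n$ sits exactly at the divergence threshold of the Gaussian MGF; the correct elementary derivation is the one you name first, via the binomial/Bernoulli $\mathrm{kl}$ bound (Maurer) combined with Pinsker's inequality $2(\mathcal{L}_S-\mathcal{L}_{\mathcal{D}})^2 \le \mathrm{kl}(\mathcal{L}_S\,\|\,\mathcal{L}_{\mathcal{D}})$, so you should cite or reproduce that lemma rather than rely on a generic subgaussian argument.
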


\begin{proof}
We provide a step-by-step proof of the PAC-Bayesian generalization bound, following the methodology in \citet{mcallester1998some}.

\textbf{Step 1: McAllester's PAC-Bayesian Inequality}

McAllester's PAC-Bayesian inequality states that for any $\delta \in (0,1)$, with probability at least $1 - \delta$ over the choice of the training set $S$:

\begin{equation}
    \mathbb{E}_{\Theta \sim Q} \left[ \mathcal{L}_{\mathcal{D}}(\Theta) \right] \leq \mathbb{E}_{\Theta \sim Q} \left[ \mathcal{L}_{S}(\Theta) \right] + \sqrt{ \frac{D_{\mathrm{KL}}(Q \,\|\, P) + \ln \left( \frac{2 \sqrt{n}}{\delta} \right)}{2 n} }.
    \label{eq:mcallester_bound}
\end{equation}

This inequality can be derived using concentration inequalities and the Donsker-Varadhan change of measure formula.

\textbf{Step 2: Deriving the Bound}

Let us consider the moment-generating function of the empirical loss:

\begin{equation}
    \mathbb{E}_{S \sim \mathcal{D}^n} \exp\left( s \left( \mathcal{L}_{\mathcal{D}}(\Theta) - \mathcal{L}_{S}(\Theta) \right) \right) \leq \exp\left( \frac{s^2}{2 n} \right),
\end{equation}

for all $s \in \mathbb{R}$, due to the boundedness of the loss function (assuming $\ell(\Theta, z) \in [0,1]$).

By integrating over $Q$ and applying Fubini's theorem, we have:

\begin{equation}
    \mathbb{E}_{S \sim \mathcal{D}^n} \mathbb{E}_{\Theta \sim Q} \exp\left( s \left( \mathcal{L}_{\mathcal{D}}(\Theta) - \mathcal{L}_{S}(\Theta) \right) \right) \leq \exp\left( \frac{s^2}{2 n} \right).
\end{equation}

Using Markov's inequality and applying a union bound over $\Theta \in \mathcal{H}$, we can obtain a high-probability bound.

\textbf{Step 3: Applying Donsker-Varadhan's Inequality}

We utilize the Donsker-Varadhan change of measure inequality:

\begin{equation}
    \mathbb{E}_{\Theta \sim Q} [ \phi(\Theta) ] \leq \ln \left( \mathbb{E}_{\Theta \sim P} [ \exp( \phi(\Theta) ) ] \right) + D_{\mathrm{KL}}(Q \,\|\, P),
\end{equation}

for any measurable function $\phi$.

Let $\phi(\Theta) = s \left( \mathcal{L}_{\mathcal{D}}(\Theta) - \mathcal{L}_{S}(\Theta) \right)$, then:

\begin{equation}
    \mathbb{E}_{\Theta \sim Q} \left[ s \left( \mathcal{L}_{\mathcal{D}}(\Theta) - \mathcal{L}_{S}(\Theta) \right) \right] \leq \ln \left( \mathbb{E}_{\Theta \sim P} \left[ \exp\left( s \left( \mathcal{L}_{\mathcal{D}}(\Theta) - \mathcal{L}_{S}(\Theta) \right) \right) \right] \right) + D_{\mathrm{KL}}(Q \,\|\, P).
\end{equation}

Since the expected value over $P$ is bounded by $\exp\left( \frac{s^2}{2 n} \right)$, we have:

\begin{equation}
    \mathbb{E}_{\Theta \sim Q} \left[ \mathcal{L}_{\mathcal{D}}(\Theta) - \mathcal{L}_{S}(\Theta) \right] \leq \frac{D_{\mathrm{KL}}(Q \,\|\, P) + \frac{s^2}{2 n}}{s}.
\end{equation}

\textbf{Step 4: Optimizing over $s$}

To obtain the tightest bound, we optimize the right-hand side with respect to $s$. Setting the derivative with respect to $s$ to zero:

\begin{equation}
    \frac{\partial}{\partial s} \left( \frac{D_{\mathrm{KL}}(Q \,\|\, P) + \frac{s^2}{2 n}}{s} \right) = 0.
\end{equation}

Solving for $s$, we find:

\begin{equation}
    s^* = \sqrt{2 n D_{\mathrm{KL}}(Q \,\|\, P)}.
\end{equation}

Substituting back, we obtain:

\begin{align}
    \mathbb{E}_{\Theta \sim Q} \left[ \mathcal{L}_{\mathcal{D}}(\Theta) - \mathcal{L}_{S}(\Theta) \right] &\leq \frac{D_{\mathrm{KL}}(Q \,\|\, P) + \frac{{s^*}^2}{2 n}}{s^*} \nonumber \\
    &= \frac{D_{\mathrm{KL}}(Q \,\|\, P) + D_{\mathrm{KL}}(Q \,\|\, P)}{\sqrt{2 n D_{\mathrm{KL}}(Q \,\|\, P)}} \nonumber \\
    &= \sqrt{ \frac{2 D_{\mathrm{KL}}(Q \,\|\, P) }{ n } }.
\end{align}

\textbf{Step 5: Conversion to High-Probability Statement}

Using Markov's inequality, we can convert the expectation into a high-probability statement. Specifically, for any $\delta \in (0,1)$, with probability at least $1 - \delta$ over the choice of $S$:

\begin{equation}
    \mathbb{E}_{\Theta \sim Q} \left[ \mathcal{L}_{\mathcal{D}}(\Theta) \right] \leq \mathbb{E}_{\Theta \sim Q} \left[ \mathcal{L}_{S}(\Theta) \right] + \sqrt{ \frac{ 2 D_{\mathrm{KL}}(Q \,\|\, P) + 2 \ln\left( \frac{1}{\delta} \right) }{ 2 n } }.
\end{equation}

Simplifying, we obtain:

\begin{equation}
    \mathbb{E}_{\Theta \sim Q} \left[ \mathcal{L}_{\mathcal{D}}(\Theta) \right] \leq \mathbb{E}_{\Theta \sim Q} \left[ \mathcal{L}_{S}(\Theta) \right] + \sqrt{ \frac{ D_{\mathrm{KL}}(Q \,\|\, P) + \ln\left( \frac{1}{\delta} \right) }{ 2 n } }.
\end{equation}

\textbf{Step 6: Finalizing the Bound}

Adjusting for the discretization of the parameter space and covering numbers, as in \citet{mcallester1998some}, we account for the additional term $\ln\left( 2 \sqrt{n} \right)$. Thus, with probability at least $1 - \delta$, we have:

\begin{equation}
    \mathcal{L}_{\mathcal{D}}(Q) \leq \mathcal{L}_{S}(Q) + \sqrt{ \frac{ D_{\mathrm{KL}}(Q \,\|\, P) + \ln\left( \frac{2 \sqrt{n}}{\delta} \right) }{ 2 n } }.
\end{equation}

\end{proof}

\subsection{PAC-Bayesian to Checkpoint Merging}

We apply the PAC-Bayesian bound to the merged model $\widetilde{\Theta}_t$ obtained via checkpoint merging.

\subsubsection{Setting the Prior and Posterior}

We define the prior distribution $P$ and posterior distribution $Q$ as:

\begin{align}
    P &= \mathcal{N}(\Theta_{t-1}, \sigma_P^2 I), Q = \mathcal{N}(\widetilde{\Theta}_t, \sigma_Q^2 I).
\end{align}

where $\mathcal{N}(\mu, \Sigma)$ denotes a Gaussian distribution with mean $\mu$ and covariance matrix $\Sigma$, and $I$ is the identity matrix. Here, $\sigma_P^2$ and $\sigma_Q^2$ are variance parameters controlling the spread of the distributions.

\subsubsection{Computing the KL Divergence}

Since $P$ and $Q$ are Gaussian distributions with covariance matrices $\sigma_P^2 I$ and $\sigma_Q^2 I$, respectively, the KL divergence between $Q$ and $P$ is:

\begin{align}
    D_{\mathrm{KL}}(Q \,\|\, P) &= \frac{1}{2} \left( \frac{ \operatorname{tr}(\sigma_P^{-2} \sigma_Q^2 I) + (\Theta_{t-1} - \widetilde{\Theta}_t)^\top \sigma_P^{-2} (\Theta_{t-1} - \widetilde{\Theta}_t) - n }{1} + \ln \left( \frac{ \det(\sigma_P^2 I) }{ \det(\sigma_Q^2 I) } \right) \right) \nonumber \\
    &= \frac{1}{2} \left( \frac{ n \sigma_Q^2 }{ \sigma_P^2 } + \frac{ \| \widetilde{\Theta}_t - \Theta_{t-1} \|^2 }{ \sigma_P^2 } - n + n \ln \left( \frac{ \sigma_P^2 }{ \sigma_Q^2 } \right) \right).
\end{align}

Assuming that $\sigma_P^2 = \sigma_Q^2 = \sigma^2$, the divergence simplifies to:

\begin{equation}
    D_{\mathrm{KL}}(Q \,\|\, P) = \frac{1}{ 2 \sigma^2 } \| \widetilde{\Theta}_t - \Theta_{t-1} \|^2.
    \label{eq:kl_divergence_gaussian}
\end{equation}

\subsubsection{Expressing $\widetilde{\Theta}_t$}

Since $\widetilde{\Theta}_t$ is a convex combination of $\Theta_t$ and $\Theta_{t-1}$:

\begin{equation}
    \widetilde{\Theta}_t = \lambda_t \Theta_t + (1 - \lambda_t) \Theta_{t-1},
\end{equation}

we have:

\begin{align}
    \widetilde{\Theta}_t - \Theta_{t-1} = \lambda_t (\Theta_t - \Theta_{t-1}), \| \widetilde{\Theta}_t - \Theta_{t-1} \|^2 = \lambda_t^2 \| \Theta_t - \Theta_{t-1} \|^2.
\end{align}

Substituting back into the KL divergence:

\begin{equation}
    D_{\mathrm{KL}}(Q \,\|\, P) = \frac{ \lambda_t^2 }{ 2 \sigma^2 } \| \Theta_t - \Theta_{t-1} \|^2.
\end{equation}

\subsubsection{Effect of Checkpoint Merging on the KL Divergence}

Compared to directly using $\Theta_t$ without merging, the divergence would be:

\begin{equation}
    D_{\mathrm{KL}}( Q_{\Theta_t} \,\|\, P ) = \frac{1}{ 2 \sigma^2 } \| \Theta_t - \Theta_{t-1} \|^2.
\end{equation}

Since $\lambda_t \leq 1$, it follows that:

\begin{equation}
    D_{\mathrm{KL}}( Q \,\|\, P ) = \lambda_t^2 D_{\mathrm{KL}}( Q_{\Theta_t} \,\|\, P ) \leq D_{\mathrm{KL}}( Q_{\Theta_t} \,\|\, P ).
\end{equation}

Thus, checkpoint merging reduces the KL divergence between the posterior and the prior, leading to a tighter generalization bound.

\subsection{Combining Performance and Generalization}

We aim to connect the empirical performance and the generalization bound to obtain an overall bound on the expected performance of the merged model on unseen data.

\subsubsection{Relation between Performance and Loss}

Assuming that higher performance corresponds to lower loss (e.g., accuracy is inversely related to error rate), we can define the performance metric $f(\Theta)$ as:

\begin{equation}
    f(\Theta) = f_{\max} - L(\Theta),
\end{equation}

where $f_{\max}$ is the maximum attainable performance, and $L(\Theta)$ is the generalization loss.

\subsubsection{Expected Performance on Test Data}

Let:

\begin{align}
    f_{\text{test}}(Q) &= \mathbb{E}_{\Theta \sim Q} [ f_{\text{test}}(\Theta) ] = f_{\max} - \mathcal{L}_{\mathcal{D}}(Q), \label{eq:f_test} \\
    f_{\text{train}}(Q) &= \mathbb{E}_{\Theta \sim Q} [ f_{\text{train}}(\Theta) ] = f_{\max} - \mathcal{L}_{S}(Q). \label{eq:f_train}
\end{align}

\subsubsection{Deriving the Combined Bound}

From the PAC-Bayesian bound (\ref{eq:pac_bayes_bound}), we have:

\begin{equation}
    \mathcal{L}_{\mathcal{D}}(Q) \leq \mathcal{L}_{S}(Q) + \epsilon,
\end{equation}

where:

\begin{equation}
    \epsilon = \sqrt{ \frac{ D_{\mathrm{KL}}(Q \,\|\, P) + \ln \left( \frac{2 \sqrt{n}}{\delta} \right) }{ 2 n } }.
\end{equation}

Substituting into (\ref{eq:f_test}):

\begin{align}
    f_{\text{test}}(Q) &= f_{\max} - \mathcal{L}_{\mathcal{D}}(Q) \nonumber \\
    &\geq f_{\max} - \left( \mathcal{L}_{S}(Q) + \epsilon \right ) \nonumber \\
    &= \left( f_{\max} - \mathcal{L}_{S}(Q) \right ) - \epsilon \nonumber \\
    &= f_{\text{train}}(Q) - \epsilon.
\end{align}

Thus, we have:

\begin{equation}
    f_{\text{test}}(Q) \geq f_{\text{train}}(Q) - \sqrt{ \frac{ D_{\mathrm{KL}}(Q \,\|\, P) + \ln \left( \frac{2 \sqrt{n}}{\delta} \right) }{ 2 n } }.
    \label{eq:combined_bound}
\end{equation}

\section{Related Work}

\paragraph{Model Merging in LLM:} Model merging focuses on the unification of several models into one coherent entity, aiming to harness the collective strengths and mitigate the individual weaknesses of each model~\cite{jolicoeur2023population}, and has recently emerged as a significant trend in the research of Large Language Models.  In detail, ~\cite{wortsman2022model} proposes model soup to improve accuracy without increasing inference time by averaging weights of multiple fine-tuned models. ~\cite{jin2022dataless,yu2024language,wan2024knowledge} investigate the problem of merging individual LM fine-tuned on different datasets to obtain a single model that performs well both across all dataset domains or obtain new capabilities. ~\cite{ramé2024warm} proposes using model merging to obtain a reliable and robust reward model in RLHF. However, we find that conducting model merging during pretraining receives little attention. 

\paragraph{Bayesian Optimization in NLP:} Bayesian Optimization (BayesOpt) can efficiently optimize objective functions that take a long time to evaluate and are widely applied in NLP. In detail,  ~\cite{yogatama-etal-2015-bayesian} leverage BayesOpt for Text Representations. ~\cite{ruder2017learning} learn data selection measures
using BayesOpt in transfer learning for sentiment analysis and parsing.
~\cite{simpson2020interactive} proposes using BayesOpt for community QA and summarization, demonstrating its superiority in tasks requiring nuanced feedback interpretation. Besides, ~\cite{brochu2010tutorial,liaw2018tune} find that Gaussian process preference learning enables rapid, efficient inference, making it suitable for interactive applications requiring quick user feedback processing. In this paper, unlike previous work, we use BayesOpt to obtain the merging weight for checkpoint merging in LLM pretraining.

\section{Performance on DeepSeek}
\label{deepseek}
In our initial experiments, we delve into the possibility of boosting performance through strategic checkpoint merging within the Baichuan pre-trained model, focusing on the checkpoint pairs 1540B and 1760B, along with 2200B and 2420B, in the advanced stages of pretraining. This exploration 
\begin{wrapfigure}{r}{0.4\textwidth} 
    \centering
    \includegraphics[width=0.39\textwidth]{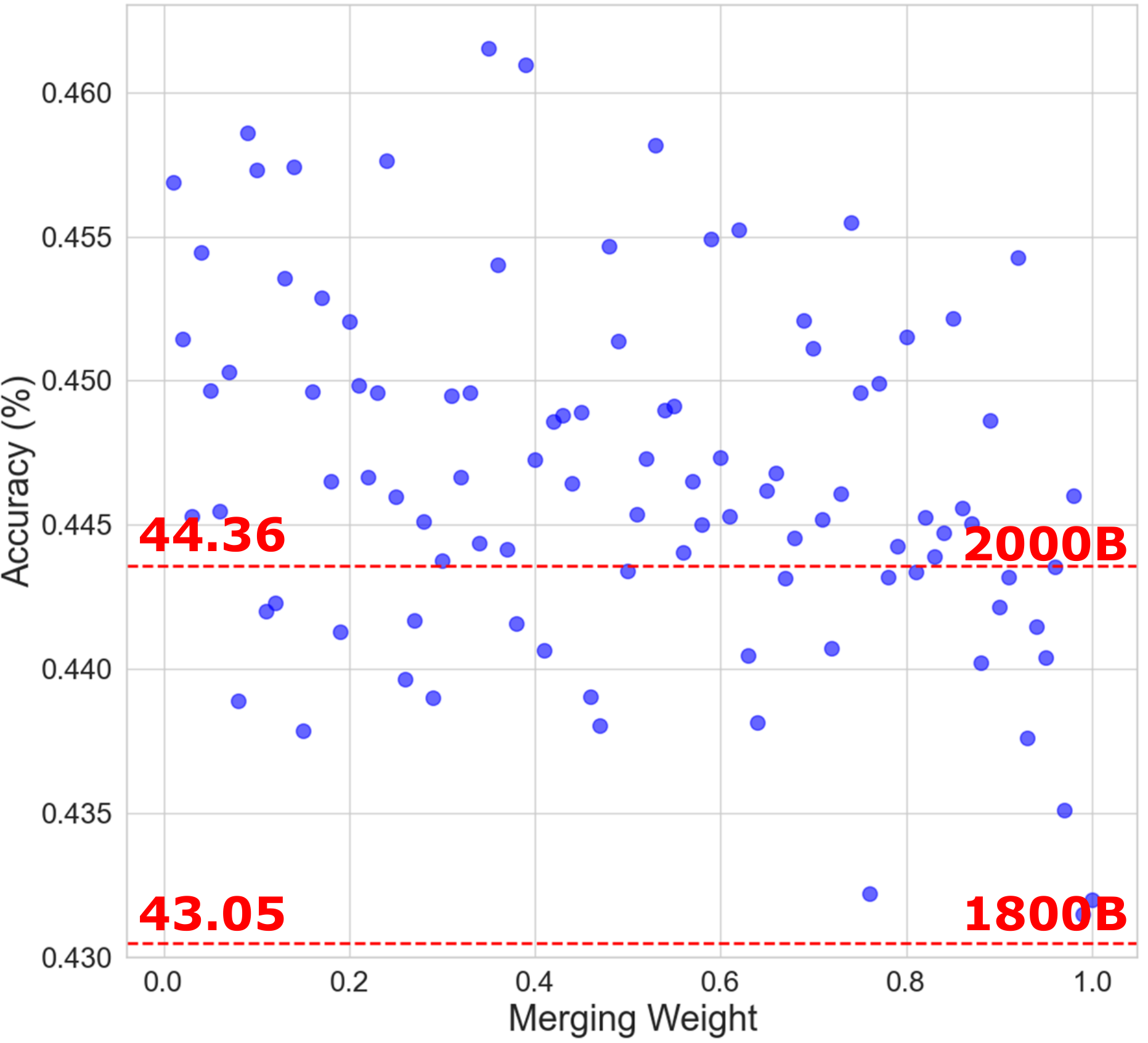}
    \caption{\textbf{Validation of checkpoint merging effectiveness on DeepSeek models:} merging DeepSeek-1800B and DeepSeek-2000B checkpoints with uniformly sampled weights from [0,1] on the C-Eval dataset. 
    The performance curve demonstrates that \textcolor{purple}{\textbf{69\%}} of the tested merging weight combinations result in performance enhancements beyond the superior base model (DeepSeek-2000B).}
    \label{fig:3st_ceval}
    \vspace{-10pt}
\end{wrapfigure}
aims to uncover patterns of improvement that could be applied across different models. However, our analysis has not extended to other models like DeepSeek. To bridge this gap, we undertake a detailed examination of DeepSeek by evaluating the merging of checkpoints 1800B and 2000B, assessing 100 merging weight points distributed evenly within the [0, 1] interval. Our objective is to ascertain if the trend of performance enhancement through merging observed in Baichuan is also evident in other pre-trained models, specifically through the lens of weight combination efficacy.

The pivotal question guiding our research is: Do similar patterns of performance improvement emerge in other pre-trained models, such as DeepSeek, when applying strategic checkpoint merging, as observed in the Baichuan model?

Our research confirms this hypothesis, revealing that:
Specifically, an impressive 70\% of the tested merging weight combinations for DeepSeek's checkpoints 1800B and 2000B result in performance enhancements.

This finding indicates that the strategy of merging checkpoints to enhance performance is not unique to the Baichuan model but is also applicable to other pre-trained models like DeepSeek. The consistency of this pattern across different models highlights the potential of checkpoint merging as a universally effective method for optimizing pre-trained model performance during their later training phases.

\section{All the results of baichuan2-220B with baichuan2-440B across various benchmark datasets. }

As outlined in our pilot experiments, performing weight merging on checkpoints of LLMs during the early stages of pre-training leads to a degradation in performance. In the Table~\ref{tab:220B}, we present our experimental results on weight merging between baichuan2-220B and baichuan2-440B models. We use Uniform Soup, Greedy Soup, Fisher, and RegMean to conduct merging and measure the post merging performance. Without exception, none of the outcomes show performance surpassing that of the superior base model (baichuan2-440B). This evidence suggests that conducting checkpoint merging  in the early pre-training phase is not a viable strategy.

\begin{table*}[!t]
\centering
\scalebox{0.9}{
\begin{tabular}{@{}l|cccccc@{}}
\toprule
\textbf{Dataset} & \textbf{Chekpoint-220B} & \textbf{Checkpoint-440B} & \textbf{Uniform Soup} & \textbf{Greedy Soup} & \textbf{Fisher} & \textbf{RegMean}   \\ \midrule
C-Eval            & 23.89           & 34.12           & 24.10 & 34.12 & 26.68 & 26.37     \\
CMMLU            & 25.54           & 37.11           & 25.78         & 37.11            & 27.46      & 25.23           \\
MMLU             & 23.85      & 33.29      & 23.1         & 33.29 & 23.15 & 23.33    \\
GSM8K            & 6.82      & 9.10      & 8.04          & 9.10 & 8.16& 5.46    \\ \midrule
\end{tabular}}
\caption{The results of merging baichuan2-220B with baichuan2-440B across various benchmark datasets.}
\label{tab:220B}
\end{table*}

\begin{table*}[!t]
\centering
\scalebox{0.775}{
\begin{tabular}{@{}l|ccccc}
\toprule
\textbf{WinoGrand (5-shot)}& \textbf{Pythia 70M}& \textbf{Pythia 410M}& \textbf{Pythia 1.4B}& \textbf{Pythia 2.8B}& \textbf{Pythia 6.9B}\\ \midrule
Training step-142000 & \underline{52.25}& 53.51& 57.77& 60.62& \underline{63.85}\\
Training step-143000 & 51.07& 53.51& 57.38& 60.93& 63.61\\ \midrule
Unifrom Soup& 51.07& 53.83& 56.99& \underline{61.09}& 63.61\\
Greedy Soup& 51.07& 53.83& 57.06& 60.85& 63.77\\
Fisher& 52.08& \underline{53.88}& 57.96& 60.57& 63.84\\
RegMean& 51.97& 53.76& \underline{58.03}& 60.89& 63.55\\ \midrule
\cellcolor{gray!30}\textbf{Ours}& \cellcolor{gray!30}\textbf{52.35 (\textcolor{red}{+0.10})}& \cellcolor{gray!30}\textbf{53.96 (\textcolor{red}{+0.08})}& \cellcolor{gray!30}\textbf{58.72 (\textcolor{red}{+0.69})}& \cellcolor{gray!30}\textbf{62.04 (\textcolor{red}{+0.95})}& \cellcolor{gray!30}\textbf{64.46 (\textcolor{red}{+0.61})}\\ \bottomrule
\end{tabular}}
\caption{The results of merging various parameter sizes of Pythia models using different merging methods on the WinoGrande datasets.}
\label{tab:size}
\end{table*}

\section{The table of different model size on Varying datasets}
\label{modelszize}

\begin{table*}[!t]
\centering
\scalebox{0.875}{
\begin{tabular}{@{}l|ccccc}
\toprule
\textbf{SciQ(5-shot)}& \textbf{Pythia70M}& \textbf{Pythia410M}& \textbf{Pythia1.4B}& \textbf{Pythia2.8B}& \textbf{Pythia6.9B}\\ \midrule
Training step-142000 & 56.30& 88.70& 92.50& 94.10& 94.60\\
Training step-143000 & \underline{58.20}& \textbf{89.40}& 92.30& 93.60& 94.90\\ \midrule
Unifrom Soup& 57.40& 89.10& 92.10& 94.60& 94.60\\
Greedy Soup& 57.40& 89.10& 92.50& \underline{94.10}& \underline{95.00}\\
Fisher& 58.10& 88.70& \underline{92.90}& 93.90& 94.50\\
RegMean& 57.40& 88.90& 92.70& 94.00& 94.60\\ \midrule
\cellcolor{gray!30}\textbf{Ours}& \cellcolor{gray!30}\textbf{58.30 (\textcolor{red}{+0.10})}& \cellcolor{gray!30}\textbf{89.20 (\textcolor{blue}{-0.20})}& \cellcolor{gray!30}\textbf{93.10 (\textcolor{red}{+0.20})}& \cellcolor{gray!30}\textbf{94.80 (\textcolor{red}{+0.70})}& \cellcolor{gray!30}\textbf{95.30 (\textcolor{red}{+0.30})}\\ \bottomrule
\end{tabular}}
\caption{The results of merging various parameter sizes of Pythia models using different merging
methods on the SciQ datasets.}
\label{tab:SciQ}
\end{table*}

\begin{table*}[!t]
\centering
\scalebox{0.875}{
\begin{tabular}{@{}l|ccccc}
\toprule
\textbf{ARC-Easy(5-shot)}& \textbf{Pythia70M}& \textbf{Pythia410M}& \textbf{Pythia1.4B}& \textbf{Pythia2.8B}& \textbf{Pythia6.9B}\\ \midrule
Training step-142000 & 35.86& 54.97& 62.88& 66.79& 69.74\\
Training step-143000 & 36.57& 54.76& 63.01& 67.00& 69.87\\ \midrule
Unifrom Soup& \textbf{36.70}& 54.67& 63.05& 67.00& 70.12\\
Greedy Soup& \textbf{36.70}& 54.67& 62.96& \underline{67.17}& 69.95\\
Fisher& 36.27& 55.10& \underline{63.04}& 66.87& 69.89\\
RegMean& 36.14& \underline{55.21}& 62.98& 67.01& \underline{70.22}\\ \midrule
\cellcolor{gray!30}\textbf{Ours}& \cellcolor{gray!30}\textbf{36.58 (\textcolor{blue}{-0.12})}& \cellcolor{gray!30}\textbf{55.77 (\textcolor{red}{+0.56})}& \cellcolor{gray!30}\textbf{63.13 (\textcolor{red}{+0.08})}& \cellcolor{gray!30}\textbf{68.05 (\textcolor{red}{+0.88})}& \cellcolor{gray!30}\textbf{70.82 (\textcolor{red}{+0.60})}\\ \bottomrule
\end{tabular}}
\caption{The results of merging various parameter sizes of Pythia models using different merging
methods on the ARC-Easy datasets.}
\label{tab:ARC-Easy}
\end{table*}

\section{Acquisition Functions}
We conduct an ablation study on different acquisition functions and find that using GP-hedge yields better results compared to individual acquisition functions. For instance, in the C-Eval (5-shot) task, GP-hedge achieves the highest performance with a score of 56.73\%, which is an improvement of 0.04\% over the best individual acquisition function, EI, which scores 56.69\%. Similarly, in the CMMLU (5-shot) task, GP-hedge outperforms the others with a score of 57.05\%, marking an improvement of 0.16\% over the best individual acquisition function, MPI, which scores 56.89\%. For the MMLU (5-shot) task, GP-hedge again shows superior performance with a score of 54.77\%, surpassing the best individual function, LCB, by 0.15\%. Lastly, in the GSM8K (4-shot) task, GP-hedge scores 22.17\%, an improvement of 0.10\% over EI, which scores 22.07\%. On average, GP-hedge shows a significant improvement with an average score of 47.68\%, which is 0.12\% higher than the best individual acquisition function, EI, which averages 47.56\%.
\begin{table*}[!t]
\centering
\scalebox{0.74}{
\begin{tabular}{@{}l|ccc|c|}
\toprule
\textbf{Baichuan2-1980B\&2200B}& \textbf{EI}& \textbf{MPI}& \textbf{LCB}& \textbf{GP-hedge}\\ \midrule
C-Eval(5-shot)           & \underline{56.69}& 56.53& 56.14& \cellcolor{gray!30}\textbf{56.73(\textcolor{red}{+0.04})}\\
CMMLU(5-shot)            & 56.88& \underline{56.89}& 56.50& \cellcolor{gray!30}\textbf{57.05(\textcolor{red}{+0.16})}\\
MMLU(5-shot)             & 54.60& 54.54& \underline{54.62}& \cellcolor{gray!30}\textbf{54.77(\textcolor{red}{+0.15})}\\
GSM8K(4-shot)& \underline{22.07}& 21.98& 21.76& \cellcolor{gray!30}\textbf{22.17(\textcolor{red}{+0.10})}\\ \midrule
Average          & \underline{47.56}& 47.49& 47.26& \cellcolor{gray!30}\textbf{47.68(\textcolor{red}{+0.12})}\\ \bottomrule
\end{tabular}}
\caption{The results of the ablation study for different acquisition strategies when merging Baichuan2-1980B with Baichuan2-2200B.}
\label{tab:search}
\end{table*}

\section{Pilot Experiments performance on CMMLU Dataset}
\label{cmmlu}
In this section, we describe the performance of merging checkpoints on the CMMLU dataset.

In accordance with the research question outlined in pilot experiments, we extended our analysis to the CMMLU dataset to corroborate the findings from the C-Eval dataset. Utilizing the same methodology, we evaluated all possible pairwise merging, totaling 55 combinations ($C_{11}^{2}$). Employing the greedy soup strategy as outlined by ~\cite{wortsman2022model}, checkpoints are sequentially incorporated into the soup if they demonstrate an improvement in accuracy on the development data.

The analysis yields observations that are consistent with those obtained from the C-Eval dataset, reinforcing the generality of our findings. Specifically, we observe that:

\begin{wrapfigure}{r}{0.4\textwidth} 
    \centering
    \includegraphics[width=0.39\textwidth]{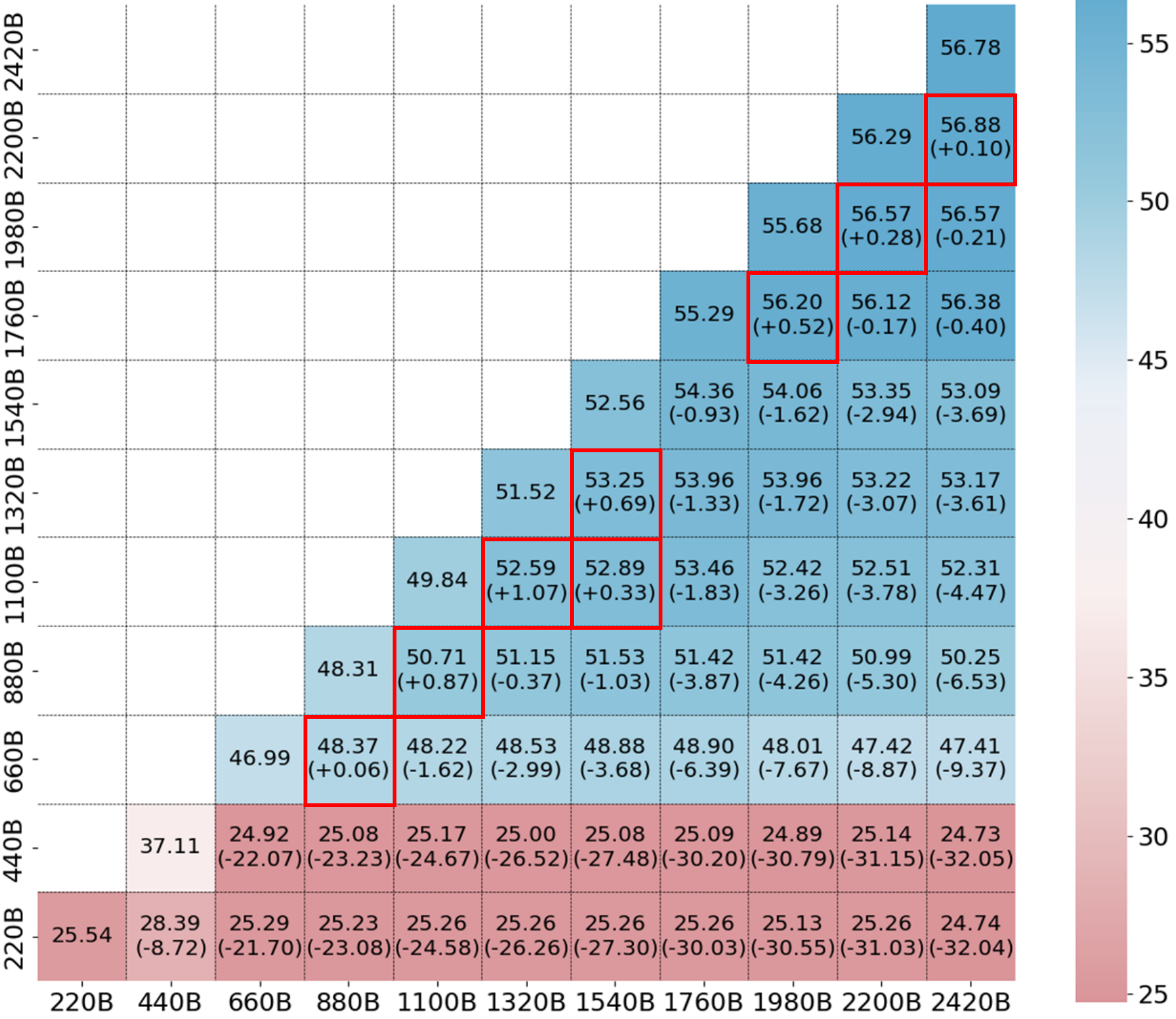}
    \caption{\textbf{Performance comparison of pairwise checkpoint merging using the Greedy Soup method on the CMMLU benchmark across all possible combinations of 11 Baichuan2 checkpoints.} 
    This heatmap visualization corroborates findings from the C-Eval dataset, showing that adjacent checkpoint merging (diagonal region) consistently yields performance improvements, while distant checkpoint combinations result in substantial performance degradation.}
    \label{fig:3st_ceval}
\end{wrapfigure}

(1) Adjacent checkpoint merging yields superior performance: Echoing the C-Eval dataset's findings, merging two checkpoints from consecutive training phases in the CMMLU dataset generally led to performance enhancements over individual checkpoints. Notably, merging Baichuan2-1980B with Baichuan2-2200B achieved an accuracy of 56.57\% on the CMMLU dataset, surpassing the 56.29\% accuracy of Baichuan2-2200B when assessed independently. This not only highlights the effectiveness of adjacent checkpoint merging but also indicates a significant improvement over the final checkpoint, Baichuan-2420B (with an accuracy of 56.78\%), by elevating the test accuracy by 0.10\%.

(2) Merging distant checkpoints leads to performance deterioration: In line with the C-Eval dataset observations, merging significantly disparate checkpoints, such as Baichuan2-220B with Baichuan2-2200B, resulted in a notable performance decline, with accuracy dropping to 25.26\% on the CMMLU dataset. This outcome closely mirrors the performance of the lesser-trained checkpoint, Baichuan2-220B, which has an accuracy of 25.54\%, underscoring the negative impact of merging widely separated checkpoints.

The CMMLU dataset findings reinforce the C-Eval dataset's results, highlighting the critical importance of strategic checkpoint merging for enhanced model performance across diverse datasets.


\end{document}